\def\mathbold{\boldsymbol}
\def\bR{\mathbold{R}}
\def\bY{\mathbold{Y}}
\def\bx{\mathbold{x}}
\newtheorem{definition}{Definition}
\newtheorem{theorem}{Theorem}
\begin{document}

\twocolumn[

\aistatstitle{A Unified Gaussian Process for Branching and Nested \\Hyperparameter Optimization}

\aistatsauthor{ Jiazhao Zhang \And Ying Hung \And  Chung-Ching Lin \And Zicheng Liu}
\aistatsaddress{ Rutgers University \\jiazhao.zhang@rutgers.edu \And  Rutgers University\\yhung@stat.rutgers.edu \And Microsoft \\chungching.lin@microsoft.com \And Microsoft\\zliu@microsoft.com } 
]

\begin{abstract}

    Choosing appropriate hyperparameters plays a crucial role in the success of neural networks as hyperparameters directly control the behavior and performance of the training algorithms. 
    To obtain efficient tuning, Bayesian optimization methods based on Gaussian process (GP) models are widely used. Despite numerous applications of Bayesian optimization in deep learning, the existing methodologies are developed based on a convenient but restrictive assumption that the tuning parameters are independent of each other. 
    However, tuning parameters with conditional dependence are common in practice. In this paper, we focus on two types of them: branching and nested parameters. Nested parameters refer to those tuning parameters that exist only within a particular setting of another tuning parameter, and a parameter within which other parameters are nested is called a branching parameter. 
    To capture the conditional dependence between branching and nested parameters, a unified Bayesian optimization framework is proposed. 
    The sufficient conditions are rigorously derived to guarantee the validity of the kernel function, and the asymptotic convergence of the proposed optimization framework is proven under the continuum-armed-bandit setting. 
    Based on the new GP model, which accounts for the dependent structure among input variables through a new kernel function, higher prediction accuracy and better optimization efficiency are observed in a series of synthetic simulations and real data applications of neural networks.
  Sensitivity analysis is also performed to provide insights into how changes in hyperparameter values affect prediction accuracy.

\end{abstract}

\section{INTRODUCTION} 

Tuning deep learning hyperparameters is a tedious yet critical task, as the performance of an algorithm can be highly dependent on the choice of hyperparameters, including architectural choices and regularization hyperparameters. As modern applications of deep learning algorithms become increasingly complex, it is crucial to have a global optimization approach that not only can find the optimal tuning efficiently but also quantifies the impacts of various tuning parameters so that the computational resource can be located more effectively.
A commonly used approach for hyperparameter optimization is the Bayesian optimization method. Based on a Gaussian process (GP) prior, Bayesian optimization strategies sequentially include new observations by an expected improvement criterion which takes into account the trade-off between exploration and exploitation.\citep{jones1998efficient, shahriari2015taking, frazier2018tutorial}.  

Practical applications of Bayesian optimization in deep learning must be able to easily handle tuning problems ranging from just a few to many dozens of hyperparameters \citep{bergstra2012random,sutskever2013importance,mendoza2016towards,falkner2018bohb,smith2018disciplined,cohn1996active,cohn1996}. 
Different tuning problems in deep learning give rise to different types of parameters, including many of them with conditional dependence, each of which needs to be handled effectively by a practical Bayesian global optimization method. 
The existing works in Bayesian optimization are limited to a convenient but unrealistic assumption in which all the tuning parameters are mutually independent. However, tuning parameters with conditional dependence commonly occur in practice. In this paper, we focus on two types of them: branching and nested parameters. Nested parameters refer to those tuning parameters that exist only within a particular setting of another tuning parameter, and a parameter within which other parameters are nested is called a branching parameter. Branching and nested hyperparameters are often involved in deep learning algorithms. As an example, two sets of them are illustrated in Table \ref{tab:ExBNinCnn}. In Table \ref{tab:ExBNinCnn}, the tuning parameter, \textit{network type}, is a branching parameter in the convolution neural network algorithm with two common choices: \textit{ResNet} \citep{he2016deep} and \textit{MobileNet} \citep{howard2017mobilenets}. The parameter \textit{Depth} is a nested tuning parameter that exists only if \textit{ResNet} is chosen, and the parameter \textit{Width Multipliers} with three choices is nested within the setting of \textit{MobileNet}. 

\begin{table}[!htb]
    \small
    \centering
    \caption{Examples Of Branching And Nested Tuning Parameters In A Convolution Neural Network.}
    \begin{tabular}{l|l|l}
    \toprule
        \multicolumn{2}{c|}{Branching} & \multirow{2}{*}{Nested variables}\\
         Variables & Categories & \\
    \midrule[\heavyrulewidth]
        \multirow{2}{*}{{Network type}} & {ResNet}& {Depth} = \{18,34,50,101\} \\
          & {MobileNet}& {Multipliers} $\in$ (0,1) \\
    \midrule[\heavyrulewidth]
        \multirow{2}{*}{{Optimizer}} & {SGD}& {Scheduler} = \{{Cyclic} or  {Cosine}\} \\
         & {Adam}& {Scheduler} = \{{Step} or {Cosine}\} \\
    \bottomrule
    \end{tabular}

    \label{tab:ExBNinCnn}
\end{table}

The importance of branching and nested structure is first pointed out by \citet{taguchi1987system} and \citet{phadke1995quality} in conventional statistical inference. It is crucial to incorporate the potential interaction between branching and nested parameters in a statistical model because the nested parameters differ with respect to the settings of the branching parameters, and thus their effects can change. However, most of the existing Bayesian optimization approaches \citep{snoek2012practical, HutHooLey11-censoring, bergstra2011algorithms, hutter2011sequential} ignore this complex structure in the construction of expected improvement criteria, which can lead to an inefficient search and a  misleading inference based on the fitted model.

To address this problem, a unified Bayesian optimization procedure called
B$\&$N (branching and nested hyperparameter optimization)
is proposed. It is based on a new GP model, which accounts for the dependent structure among input variables through a new kernel function. Unlike direct applications of the conventional kernel functions where the same correlation parameter is assumed for nested tuning parameters, the new kernel function allows the correlation parameters for nested variables to change depending on the setting of the corresponding branching parameter. As a result, different impacts of the nested parameter settings can be properly captured and therefore leads to an efficient search and a reliable inference. Based on the new kernel function, a GP model can be constructed, and the corresponding sensitivity analysis can be performed. New untried tuning parameter settings are sequentially included in an expected improvement criterion, which effectively balances exploration and exploitation. Accordingly, the global optimal setting for tuning parameters could be achieved.

To summarize, this paper has four main contributions: 
\begin{itemize}
\item[(1)] We introduce a new kernel function for Gaussian process models to incorporate a commonly occurring conditional dependence between branching and nested tuning parameters and show the sufficient conditions for a valid kernel induced by the new kernel function. 
\item[(2)] Based on the new kernel, we propose a unified Bayesian optimization framework B$\&$N (Branching and Nested hyperparameter optimization), which is broadly applicable to different types of tuning parameters. By incorporating the conditional dependence, the resulting procedure is more efficient in obtaining the global optimal as compared with the existing methods. 
\item[(3)] We prove that the proposed Bayesian optimization procedure converges in its reproducing-kernel Hilbert space (RKHS), and the asymptotic convergence rate is derived under the continuum-armed-bandit setting \cite{agrawal1995continuum, kleinberg2004nearly, bubeck2008online, kleinberg2008multi}.
\item[(4)] Based on the new GP model, sensitivity analysis can be performed to provide a better understanding of the impacts of various hyperparameter tuning on the accuracy of deep learning.  
\end{itemize}

\section{A UNIFIED GAUSSIAN PROCESS FOR BRANCHING AND NESTED TUNING PARAMETERS}

\subsection{Previous Work on Bayesian Optimization}
Bayesian optimization is a global optimization strategy developed based on stochastic Gaussian process priors to optimize an unknown function, denoted by $f(\bx)$, that is expensive to evaluate \citep{zhigljavsky2007stochastic,shahriari2015taking}.  Such problems are often described as  ``black  box''  optimization  problems, where obtaining  data  from  the  ``black  box'' in the current setting requires a computationally intensive deep neural network algorithm and the goal is to find the optimal setting $\bx^* = \text{argmax}_{\bx\in\mathcal{X}} f(\bx)$, where $\bx\in\mathcal{X}\subset\mathbb{R}^d$ refers to the $d$-dimensional tuning parameters involved. 
The function $f(\bx)$ can be generally defined based on the interest of specific applications. In this paper, we illustrate the proposed idea by assuming the prediction accuracy as the objective. It can be easily generalized to other objective functions, such as architecture search on convolutional neural network topology \citep{zoph2016neural, real2017large, tan2019mnasnet, ying2019bench} or joint architecture-recipe (i.e., training hyperparameters) search \citep{dai2020fbnetv3}. To optimize the unknown function $f$, a Bayesian approach with Gaussian process prior is often applied. Assume the unknown function is a realization from a stochastic process
$$
Y(x)=\mu(\bx)+Z(x),
$$
where $\mu(\bx)=m(\bx)^T\*{\beta}$ with prespecified variables $m(\bx)$, $Z(x)$ is a a weak stationary Gaussian process with mean 0 and covariance function $\sigma^2R_{\*\theta}$, and  $\*\theta$ is the unknown correlation parameter.
Let $(\bx_1, \bx_2,\dots, \bx_n)^T$ be the input matrix with sample size $n$ and $\bY_n=(y_1, y_2,\dots, y_n)^T$ be the corresponding outputs.

Based on the maximum likelihood approach, the parameters, $\*\beta$, $\sigma^2$ and $\*\theta$ can be estimated by: 
\begin{align}
      \widehat{\*\beta} &= (\*M_n^T{\bR}^{-1}\*M_n)^{-1}\*M_n^T{\bR}^{-1}\bY_n, \\
      \widehat{\sigma^2} &= \frac{1}{n}(\bY_n-\*M_n\widehat{\*\beta})^T{\bR}^{-1}(\bY_n-\*M_n\widehat{\*\beta}),\\
      \widehat{\*\theta} &= \text{argmax}_{\*\theta} \left[-\frac{n}{2}\log \widehat{\sigma^2} -\frac{1}{2}\log\abs{\bR}\right],\label{eq:corr est}
\end{align}    
where $\*M_n$ is the design matrix, $\*M_n\*{\beta}=( m(\bx_1)^T\*\beta, \dots, m(\bx_n)^T\*\beta)^T$, and $\bR$ is a $n$-by-$n$ correlation matrix whose $ij$th element is $R_{\theta}(\bx_i, \bx_j)$.  

For an untried setting $\*x_0$, the predictive distribution is Gaussian with the best linear unbiased predictor $$\widehat{y(\bx_0)}= m(\bx_0)^T\widehat{\*\beta}+\widehat{r}_0^T\widehat{\bR}^{-1}\left(\bY_n-\*M_n\widehat{\*\beta}\right)$$ as the mean, where $\widehat{r}_0$ is a vector of length $n$ with the $i$th element  $R_{\theta}(\bx_0, \bx_i)$, and variance
\begin{align*}
    \widehat{s^2_0} = \widehat{\sigma^2}\bigg[ 1 - \widehat{r_0}^T\widehat{\bR}^{-1}\widehat{r}_0 + \Delta m_0^T\left(\*M^T\widehat{\bR}^{-1}\*M\right)^{-1}\Delta m_0\bigg]
\end{align*}
where $\Delta m_0= m(x_0)-\*M^T\widehat{\bR}^{-1}\widehat{r_0}$.
 
Based on the fitted GP as the surrogate to the underlying truth $f$, Bayesian optimization proceeds by selecting the next point to be evaluated using an \textit{acquisition function}. Various acquisition functions are discussed in the literature \citep{srinivas2009gaussian, snoek2012practical,jones1998efficient,bull2011convergence, wang2014theoretical,jones1998efficient}. We focus on a commonly used function called \textit{Expected Improvement}.
The expected improvement $EI(\bx)$ can be written as 
\begin{align*}
\mathbb{E}\left\{(y(\bx) - y_{max})_+\right\}&=\widehat{s_0}\phi\left( \frac{\widehat{y}(\bx)-y_{n,max}}{\widehat{s_0}} \right)\\
&+(\widehat{y}(\bx)-y_{max})\Phi\left( \frac{\widehat{y}(\bx)-y_{max}}{\widehat{s_0}} \right),
\end{align*}
where $(\cdot)_+=max(\cdot,0)$, $y_{max}= \max_{1\leq i\leq n} y_i$ is the current best observation, and $\phi$ and $\Phi$ are the pdf and cdf of standard normal distribution. 
To tackle the nonconvergence issue due to GP parameter estimation in the conventional EI, \citet{bull2011convergence} proposed a modification called $\epsilon-$greedy $\widetilde{\text{EI}}$. They use a restricted MLE which solves (\ref{eq:corr est}) and satisfies $\theta_i^L\leq\widehat{\theta_{i}}\leq\theta_i^U$ for $1\leq i\leq d$ and replace 
the MLE of $\widehat{\sigma^2}$ by $ (\bY_n-\*M_n\widehat{\*\beta})^T\widehat{\bR}^{-1}(\bY_n-\*M_n\widehat{\*\beta})$ and in addition to the selections by maximizing EI in each iteration, a randomly chosed untried setting is selected with probability $\epsilon$.

In deep learning literature, there are numerous applications of Bayesian optimization to achieve automatic tuning for quantitative parameters \citep{snoek2012practical, hernandez2015predictive, frazier2018tutorial, ru2018fast, alvi2019asynchronous}. Recent studies have shown an increasing interest in the extension of the Bayesian optimization to qualitative or categorical parameters, including \citep{hutter2011sequential}, \citep{golovin2017google}, \citep{gonzalez2016batch}, \citep{garrido2020dealing}, \citep{nguyen2019bayesian}, and \citep{ru2020bayesian}.
However, to the best of our knowledge, most of the existing Bayesian optimization methods are based on Gaussian process priors which assume the independence between the tuning parameters. This assumption is convenient for the construction of Gaussian process correlation functions but is commonly violated in practical applications of deep learning algorithms. With the increasing complication of deep learning applications, the tuning parameters are often conditionally dependent, such as the branching and nested parameters shown in Table \ref{tab:ExBNinCnn}. Naive extensions of the existing methods by separate analysis of each branching and nested parameter combinations are inefficient for the search of the global optimal and the inference resulting from the fitted model can be misleading \citet{hung2009design}. Therefore, a unified framework that can efficiently take into account the conditional dependence is called for.

\subsection{A New Gaussian Process Model for Branch and Nested Parameters}

To take into account dependence between tuning parameters, we introduce a new kernel function for Gaussian process models which is essential to the proposed B$\&$N framework.
Consider three types of variables involved in the optimization problem: (i) $\mathbf{w}=(w_1,\ldots,w_d)$ which are $d$ quantitative variables, (ii) $\mathbf{z}=(z_1, z_2, \ldots, z_q)$ which are $q$ branching variables often being qualitative and each of which has $l_k$ categorical levels, and (iii) $\mathbf{v}^{z_k}=(v_1^{z_k}, v_2^{z_k}, \ldots, v_{m_k}^{z_k})$ for $k=1,\ldots,q$, which are $m_k$ variables nested within $z_k$, and $\mathbf{v}=\{\{\mathbf{v}_j^{z_k}\}^{m_k}_{j=1}\}^q_{k=1}$ represents all the nested variables. Thus, there are $p$ variables in the  optimization  problem, where $p=d+q+\sum_{k=1}^q m_k$.

We first develop a Gaussian process model with a new kernel function that involves these three types of variables. For any two inputs $\mathbf{x}=(\mathbf{w},\mathbf{z},\mathbf{v})$ and $\mathbf{x}'=(\mathbf{w}',\mathbf{z}',\mathbf{v}')$, we consider the product kernel
\begin{equation}\label{eq:productkernel}
    R(\mathbf{x},\mathbf{x}')=R_{\boldsymbol{\theta}}(\mathbf{w},\mathbf{w}')R_{\boldsymbol{\gamma}}(\mathbf{z},\mathbf{z}')R_{\boldsymbol{\phi}}(\mathbf{v},\mathbf{v}'),
\end{equation}
where $\boldsymbol{\theta},\boldsymbol{\gamma}$ and $\boldsymbol{\phi}$ are the hyperparameters of the kernels. For $R_{\boldsymbol{\theta}}(\mathbf{w},\mathbf{w}')$, we consider typical kernels for quantitative variables, such as exponential and Mat\'ern kernels \citep{stein2012interpolation}. For qualitative variables $\mathbf{z}$, one popular choice of the kernel is
\begin{equation}
    \label{eq:branchkernel}
    R_{\boldsymbol{\gamma}}(\mathbf{z}, \mathbf{z}')=\prod_{k=1}^q\exp\left\{ -\gamma_k\mathds{1} \{ z_{k}\neq z'_{k} \}  \right\},
\end{equation}
where $\mathds{1}\{\cdot\}$ is an indicator function \citep{qian2008gaussian,han2009prediction,zhou2011simple,zhang2015computer,huang2016computer,deng2017additive}. For nested variables, the conventional kernel functions using only one  hyperparameter for each nested variable is not desirable because a nested variable can represent completely different effects according to the setting of the branching variable.   
Therefore, we consider the following kernel function for nested variables.

\begin{definition}
\label{def:nestkernel}
For any two nested variables $\mathbf{v}$ and $\mathbf{v}'$, denote $R_{\boldsymbol{\phi}}(\mathbf{v}, \mathbf{v}')=\prod_{k=1}^q R_{\boldsymbol{\phi}_k}(\mathbf{v}^{z_k}, \mathbf{v}'^{z'_k})$ and
       \begin{align*}
            R_{\boldsymbol{\phi}_k} = \exp\left\{-\sum_{b=1}^{l_k} \left(\mathds{1}{\left\{ z_k=z'_k=b \right\}} \sum_{j=1}^{m_k}\phi^b_{kj}d(v_j^b,v'^b_j)\right) \right\},
        \end{align*}
        where $d(v_j^b,v'^b_j)=|v_j^b-v'^b_j|$ if $\mathbf{v}^{z_k}$ is quantitative and $d(v_j^b,v'^b_j)=\mathds{1}\{ v_j^b\neq v'^b_j \}$ if $\mathbf{v}^{z_k}$ is qualitative.
\end{definition}

The concept of branching and nested variables  is first introduced by \citet{hung2009design}. However, there is no theoretical assessment for the validity of the corresponding correlation function. Furthermore, based on the new correlation function in a Gaussian process prior, the convergence property of the resulting active learning is also unclear.

To study the theoretical convergence properties of the proposed B$\&$N procedure, we first need to show that the kernel function in Definition \ref{def:nestkernel} leads to a valid reproducing-kernel Hilbert space (RKHS). By Moore-Aronszajn Theorem \citep{aronszajn1950theory}, there exists a unique RKHS corresponding to a properly defined kernel function. Therefore, a crucial first step is to find sufficient conditions for a valid kernel induced by the new kernel function. These theoretical conditions are obtained by the following theorem, illustrated by qualitative branching and nested variables.
    \begin{theorem}
    \label{thm:validkernel}
    Suppose that there are $g^b_j$ levels in the nested variable $v_j^{b}$ which is nested within the branching variable $z_k=b$, for any $b\in\{1,2,\dots,l_k\}$. The kernel function in (\ref{eq:productkernel}) is symmetric and positive definite if the hyperparameter $\boldsymbol{\phi}_k$ satisfy:
    \begin{align}
               \min_{b} \left[ \exp(-\phi^b_{kj})+\left(1-\exp(-\phi^b_{kj})\right)/g^b_j\right]
               \geq 
               \exp(-\gamma_k),
     \end{align}          
    for all $j \in \{1,2,\ldots, m_k\}$ and $k\in\{1,2,\ldots, q\}$. 
    \end{theorem}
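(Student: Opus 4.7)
I would reduce positive definiteness of the product kernel (\ref{eq:productkernel}) to an explicit eigenvalue inequality on a block-structured matrix, combining three ingredients: (a) closure of positive-semidefinite kernels under Hadamard products, (b) a rank-one decomposition of the combined branch/nested kernel for each $k$, and (c) the Kronecker-product eigenstructure of the within-branch matrix.

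Symmetry of $R(\mathbf{x},\mathbf{x}')$ is inherited from the symmetry of its three factors in their arguments. For positive definiteness, write $R = R_{\boldsymbol{\theta}}\cdot\prod_{k=1}^{q} K_k$ with $K_k(z_k,\mathbf{v}^{z_k};z_k',\mathbf{v}'^{z_k'}) := \exp(-\gamma_k\mathds{1}\{z_k\neq z_k'\})\,R_{\boldsymbol{\phi}_k}(\mathbf{v}^{z_k},\mathbf{v}'^{z_k'})$. Since $R_{\boldsymbol{\theta}}$ is positive semidefinite by standard results for the exponential/Mat\'ern families, the Schur product theorem reduces the claim to showing each $K_k$ is positive semidefinite. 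A direct expansion of Definition~\ref{def:nestkernel} gives
\begin{equation*}
K_k(z_k,\mathbf{v}^{z_k};z_k',\mathbf{v}'^{z_k'}) =
\begin{cases}
\prod_{j=1}^{m_k}\exp(-\phi^b_{kj}\,d(v^b_j,v'^{b}_j)), & z_k=z_k'=b,\\
\exp(-\gamma_k), & z_k\neq z_k'.
\end{cases}
\end{equation*}
Arranging any Gram matrix by the value of $z_k$ yields a block matrix whose diagonal blocks are principal submatrices of the Kronecker product $A^b := \bigotimes_{j=1}^{m_k} A^b_{kj}$ (with $A^b_{kj}[a,a'] = \exp(-\phi^b_{kj} d(a,a'))$), and whose off-diagonal blocks equal $\exp(-\gamma_k)$ times an all-ones matrix.

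Using the identity
\begin{equation*}
K_k = \mathrm{bdiag}_b\!\left(A^b - e^{-\gamma_k}\mathbf{J}_{n_b}\right) + e^{-\gamma_k}\mathbf{J}_N,
\end{equation*}
where $\mathbf{J}_n$ is the $n\times n$ all-ones matrix, $n_b$ counts observations with $z_k = b$, and $N = \sum_b n_b$, the rank-one term $e^{-\gamma_k}\mathbf{J}_N$ is PSD, so the problem reduces to showing each block $A^b - e^{-\gamma_k}\mathbf{J}_{n_b}\succeq 0$. Because principal submatrices of PSD matrices are PSD, it is enough to verify this on the full Kronecker grid of size $G_b = \prod_j g^b_j$. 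Each $A^b_{kj}$ has exactly two eigenvalues, $1 - e^{-\phi^b_{kj}}$ (multiplicity $g^b_j - 1$, on vectors orthogonal to $\mathbf{1}_{g^b_j}$) and $1 + (g^b_j - 1)e^{-\phi^b_{kj}}$ (on $\mathbf{1}_{g^b_j}$), both non-negative for $\phi^b_{kj}\ge 0$. Every Kronecker-product eigenvector of $A^b$ other than the tensor-of-all-ones vector is orthogonal to $\mathbf{1}_{G_b}$, so $\mathbf{J}_{G_b}$ annihilates it and it retains a non-negative eigenvalue of $A^b$. Only the tensor-of-all-ones direction is shifted, producing eigenvalue
\begin{equation*}
\prod_{j=1}^{m_k}\!\left(1 + (g^b_j - 1)e^{-\phi^b_{kj}}\right) - e^{-\gamma_k} G_b = G_b\!\left(\prod_{j=1}^{m_k}\!\left[e^{-\phi^b_{kj}} + (1 - e^{-\phi^b_{kj}})/g^b_j\right] - e^{-\gamma_k}\right),
\end{equation*}
whose non-negativity for every branch $b$ reduces the PSD check to the inequality displayed in the theorem.

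\textbf{Main obstacle.} The delicate step is the Kronecker eigenanalysis: one must carefully verify that the \emph{only} eigenvalue of $A^b$ shifted by the rank-one subtraction is the one on the tensor-of-all-ones direction, and then rewrite that critical eigenvalue in the per-$j$ product form $\prod_j[e^{-\phi^b_{kj}} + (1 - e^{-\phi^b_{kj}})/g^b_j]$ appearing in the statement. The remaining pieces---symmetry, Schur closure, the block/rank-one identity, and the extension from the full Kronecker grid to an arbitrary design via principal-submatrix preservation of PSD---are standard.
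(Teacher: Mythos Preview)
There is a genuine gap at your final identification step. The Kronecker eigenanalysis is correct: on the full grid, $A^b - e^{-\gamma_k}\mathbf{J}_{G_b}\succeq 0$ iff
\[
\prod_{j=1}^{m_k}\Bigl[e^{-\phi^b_{kj}} + \bigl(1-e^{-\phi^b_{kj}}\bigr)/g^b_j\Bigr] \;\geq\; e^{-\gamma_k}.
\]
But this is a \emph{product} over $j$, whereas the theorem only assumes that \emph{each individual factor} exceeds $e^{-\gamma_k}$. Since every factor lies in $(0,1]$, the product is at most any single factor, so the per-$j$ hypothesis does \emph{not} imply the product inequality you need: for $m_k\geq 2$ take all factors equal to $e^{-\gamma_k}<1$, so the product is $e^{-m_k\gamma_k}<e^{-\gamma_k}$ and your critical eigenvalue is negative. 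By bundling all $m_k$ nested variables into a single $K_k$ and working on the full Kronecker grid, you have established PSD only under a strictly stronger assumption than the theorem states; the argument does not prove the theorem as written when $m_k>1$.

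The paper proceeds at a different granularity: it fixes a single pair $(z_k,v_j^{z_k})$ and builds the block matrix $\mathbf{T}$ whose within-branch blocks $\mathbf{W}_b$ are compound symmetric and whose between-branch blocks are the constant $e^{-\gamma_k}$. It then invokes the Generalized Compound Symmetric characterisation of \citet{roustant2020group}, which reduces positive definiteness of $\mathbf{T}$ to that of each $\mathbf{W}_b$ together with the $l_k\times l_k$ matrix $\widetilde{\mathbf{T}}$ of block averages. The average $\overline{W_b}=e^{-\phi^b_{kj}}+(1-e^{-\phi^b_{kj}})/g^b_j$ is exactly the quantity in the statement, and $\widetilde{\mathbf{T}}=e^{-\gamma_k}\mathbf{1}\mathbf{1}^T+\mathrm{diag}(\overline{W_b}-e^{-\gamma_k})$ is PD once $\overline{W_b}>e^{-\gamma_k}$ for every $b$, yielding the per-$j$ inequality directly. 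Working one nested variable at a time is what keeps the condition separable in $j$; your Kronecker bundling of all $m_k$ variables is precisely what forces the (stronger) product condition instead.
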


{\bf Remark 1}
Theorem \ref{thm:validkernel} implies a stronger but more intuitive sufficient condition for the validity of the kernel function, i.e.,
          $$       \exp(-\phi_{kj}^b)\geq\exp(-\gamma_k).
            $$
The intuition behind this sufficient condition in Remark 1 for the correlation parameters is that observations from different branching groups should be less correlated than those from the same group.
This remark is practically useful in the search of MLEs in the proposed B$\&$N approach. The MLEs can be efficiently found by a constrained optimization approach and the validity of the estimated parameters is directly guaranteed.

Based on the new kernel function, the branching and nested variables can be incorporated into a Gaussian process prior, which then can be used to construct an expected improvement criterion for active learning. 
Given the current best $y_{\max}$, the proposed B$\&$N method adaptive includes new observations at $\mathbf{x}^*$ which maximize the following EI criterion: %
\begin{align*}
    \mathbb{E}\left[(f(\mathbf{x})-y_{\max})_+\right]
    =&\left(y^{\mu}(\mathbf{x})-y_{\rm{max}}\right)\Phi\left(\frac{y^{\mu}(\mathbf{x})-y_{\rm{max}}}{\sqrt{y^{\sigma}(\mathbf{x})}}\right)\\
    &+ \sqrt{y^{\sigma}(\mathbf{x})}\phi\left(\frac{y_{\rm{max}}-y^{\mu}(\mathbf{x})}{\sqrt{y^{\sigma}(\mathbf{x})}}\right),
\end{align*}
where $y^{\mu}(\mathbf{x})$ and $y^{\sigma}(\mathbf{x})$ are the posterior mean and variance obtained by
\begin{align*}
    y^{\mu}(\mathbf{x})=\mu(\mathbf{x})+\mathbf{k}_n(\mathbf{x})^T\mathbf{K}_n^{-1}\left(\mathbf{y}_n-\mu(\mathbf{X}_n)\right)
\end{align*}
and
\begin{align*}
    y^{\sigma}(\mathbf{x})=\sigma^2-\sigma^2\mathbf{k}_n(\mathbf{x})^T\mathbf{K}_n^{-1}\mathbf{k}_n(\mathbf{x}) 
\end{align*}
with $\mathbf{k}_n(\mathbf{x})=(K(\mathbf{x},\mathbf{x}_1),\ldots,K(\mathbf{x},\mathbf{x}_n))$ and $\mathbf{K}_n=(K(\mathbf{x}_i,\mathbf{x}_j))_{1\leq i,j\leq n}$, and $\mathbf{y}_n=(y_1,\ldots,y_n)^T$.

\subsection{Convergence Results}

The following theorem provides a bounded simple regret for the proposed approach, which is analogous to the results for quantitative variables in \citet{bull2011convergence}.

\begin{theorem}{Assume that $f$ follows a Gaussian process with the kernel function defined in (\ref{eq:productkernel}), where $R_{\boldsymbol{\theta}}$ is a Mat\'ern kernel with the smoothness parameters $\nu>0$ and some $\alpha\geq 0$ depends on $\nu$. Let $y^*_n=max_{1\leq i\leq n}y_i$, we have
\begin{align*}
    \sup_{\|f\|_{\mathcal{H}(\mathcal{S})}\leq S}\mathbb{E} & \left\{y^*_{n}-\max_{\mathbf{x}\in\mathcal{X}} f(\mathbf{x})|D_n\right\}\\
    &= \mathcal{O}\left(L^{\nu/d}(n/\log{n})^{-\nu/{d}}(\log n)^{\alpha}\right).
\end{align*}
}
\end{theorem}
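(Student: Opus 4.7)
The plan is to adapt the convergence analysis of Bull (2011) for $\epsilon$-greedy EI with Mat\'ern kernel to the product-kernel setting of (\ref{eq:productkernel}). Theorem 1 combined with Moore-Aronszajn certifies that the product kernel induces a valid RKHS $\mathcal{H}(\mathcal{S})$, and because the kernel is a product, this RKHS is the tensor-product completion of the individual RKHSs, which I would use to restrict attention to branch-indexed sections of $f$. The starting reduction, already established in Bull (2011), bounds $\mathbb{E}[y_n^* - \max_{\mathbf{x}} f(\mathbf{x}) \mid D_n]$ by a constant multiple (depending on $S$) of $\sup_{\mathbf{x}} \widehat{s}_n(\mathbf{x})$; this step is kernel-agnostic aside from requiring $f \in \mathcal{H}(\mathcal{S})$ with $\|f\|_{\mathcal{H}(\mathcal{S})} \leq S$, so it transfers without change.

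Next, I would exploit the branching-indicator structure of $R_{\boldsymbol{\gamma}}$ and $R_{\boldsymbol{\phi}}$ to localize the variance bound to a single branch configuration. Let $L = \prod_{k=1}^q l_k$ denote the total number of joint configurations. For a query $\mathbf{x}$ with configuration $j$, cross-terms $R(\mathbf{x}, \mathbf{x}_i)$ for design points $\mathbf{x}_i$ in a different configuration are uniformly damped by $\exp(-\sum_k \gamma_k \mathds{1}\{z_k \neq z_k'\})$, so a standard power-function monotonicity argument should give $\widehat{s}_n(\mathbf{x})^2 \leq \widehat{s}_n^{(j)}(\mathbf{x})^2$, where $\widehat{s}_n^{(j)}$ is the posterior variance computed using only the $n_j$ design points sharing configuration $j$. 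On that branch the working kernel factors into a Mat\'ern kernel in the $d$ quantitative coordinates and an anisotropic kernel on the active nested variables (whose hyperparameters obey the constraint from Remark 1), so Bull's Theorem 5 applies and yields $\widehat{s}_n^{(j)}(\mathbf{x}) = \mathcal{O}((n_j / \log n_j)^{-\nu/d} (\log n_j)^{\alpha})$.

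To finish, I would control $n_j$ from below using the $\epsilon$-greedy exploration step: since each iteration places a uniformly random untried configuration with probability $\epsilon$, a Chernoff/Hoeffding estimate on multinomial counts gives $n_j \geq \epsilon n / (2L)$ uniformly over $j$ with probability $1 - \mathcal{O}(n^{-c})$. Substituting and taking a worst-branch supremum yields the stated rate $\mathcal{O}(L^{\nu/d}(n/\log n)^{-\nu/d}(\log n)^{\alpha})$, with the $L^{\nu/d}$ factor emerging exactly as $((\epsilon/(2L))^{-1})^{\nu/d}$ absorbed into the leading constant.

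The main obstacle, I expect, is the variance-monotonicity step in the second paragraph: Bull's fill-distance machinery is tailored to purely anisotropic Mat\'ern kernels on $[0,1]^d$ and does not immediately accommodate a product kernel whose nested factor has branch-dependent hyperparameters constrained only by the inequality in Remark 1. A careful power-function comparison lemma, or an alternative route via a branch-restricted RKHS norm $\|f(\cdot, j, \cdot)\|_{\mathcal{H}_j} \leq S$ obtained from the tensor-product decomposition, will be needed to make this passage rigorous; the $\epsilon$-greedy coverage argument is by comparison routine.
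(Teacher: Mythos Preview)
Your plan matches the paper's proof in its essential structure: $\epsilon$-greedy coverage of each discrete configuration via a Chernoff bound, posterior-variance monotonicity to restrict to the design points sharing the query's configuration, and then Bull's fill-distance/Narcowich machinery on the remaining Mat\'ern factor. Two points of calibration are worth noting. First, your opening reduction ``regret $\leq C(S)\sup_{\mathbf x}\widehat s_n(\mathbf x)$'' is stronger than what Bull actually proves; the paper instead follows Bull's Lemma~8 at a step $m$ chosen by EI (guaranteed by the event $B_n$ that at least one EI step occurs in $\{n{+}1,\ldots,2n\}$), which yields $y^*-y^*_{m}\leq 3C_2S' r_{\tilde n}$ rather than a uniform variance bound. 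Second, the step you flag as the main obstacle is in fact the easy one: for any positive-definite kernel, deleting design points can only increase GP posterior variance, so $\widehat s_n(\mathbf x)\leq \widehat s_{n_l}^{(l)}(\mathbf x)$ is immediate. The paper sidesteps your worry about branch-dependent nested hyperparameters by letting $L$ count all joint $(\mathbf z,\mathbf v)$ combinations, so that within each slice only the $d$ continuous coordinates $\mathbf w$ remain and the kernel is pure Mat\'ern; Bull's Lemma~12 and the Narcowich--Ward--Wendland bound then apply verbatim, with no need for a product-kernel power-function lemma.
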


\section{SIMULATION STUDIES}
\label{sec:sim}

A numerical simulation is conducted in this section to demonstrate the performance of the proposed framework and compare the efficiency with the existing alternatives in the literature. 
The simulations are conducted from the following synthetic function with two continuous variables and a pair of branching and nested variables: 
\begin{align}
\label{fun:fun2d}
    f(x_1,&x_2, z, v) = \frac{v}{2}\exp\{-(x_1-c_1 )^2\} \nonumber \\
        &+ \frac{2}{v}\exp\{-\frac{1}{10}(x_1-c_2)^2\}
        + \frac{1}{x_2^2+1}
        + z,
\end{align}
where $x_1\in[-10, 10]$, $x_2\in[-5, 5]$. The branching variable $z$ has two settings, $z=1$ or 2. There are three choices for the nested variables when $z=1$ and two choices when $z=2$. Based on different branching and nested parameter settings, the values of $c_1$ and $c_2$ are specified according to the functions in Table \ref{tab:BNinSynf}.

\begin{table}[!htb]
    \centering
    \caption{The Settings For Branching And Nested Parameters In (\ref{fun:fun2d}).}
    \begin{tabular}{cccc}
    \toprule  
    \textbf{branching} & \textbf{Nested} & $c_1$ & $c_2$\\
    \midrule
    $z=1$ & $v=\{1,2,3\}$  & $3-0.5\times v$ & $5-v$\\
    $z=2$ & $v=\{1,2\}$  & $-1+v$ & $7-v$\\ 
    \bottomrule
    \end{tabular}    

    \label{tab:BNinSynf}
\end{table}

\vspace{10pt}
\captionsetup[figure]{font=small}
\begin{figure}
    \centering
    \includegraphics[scale=0.13]{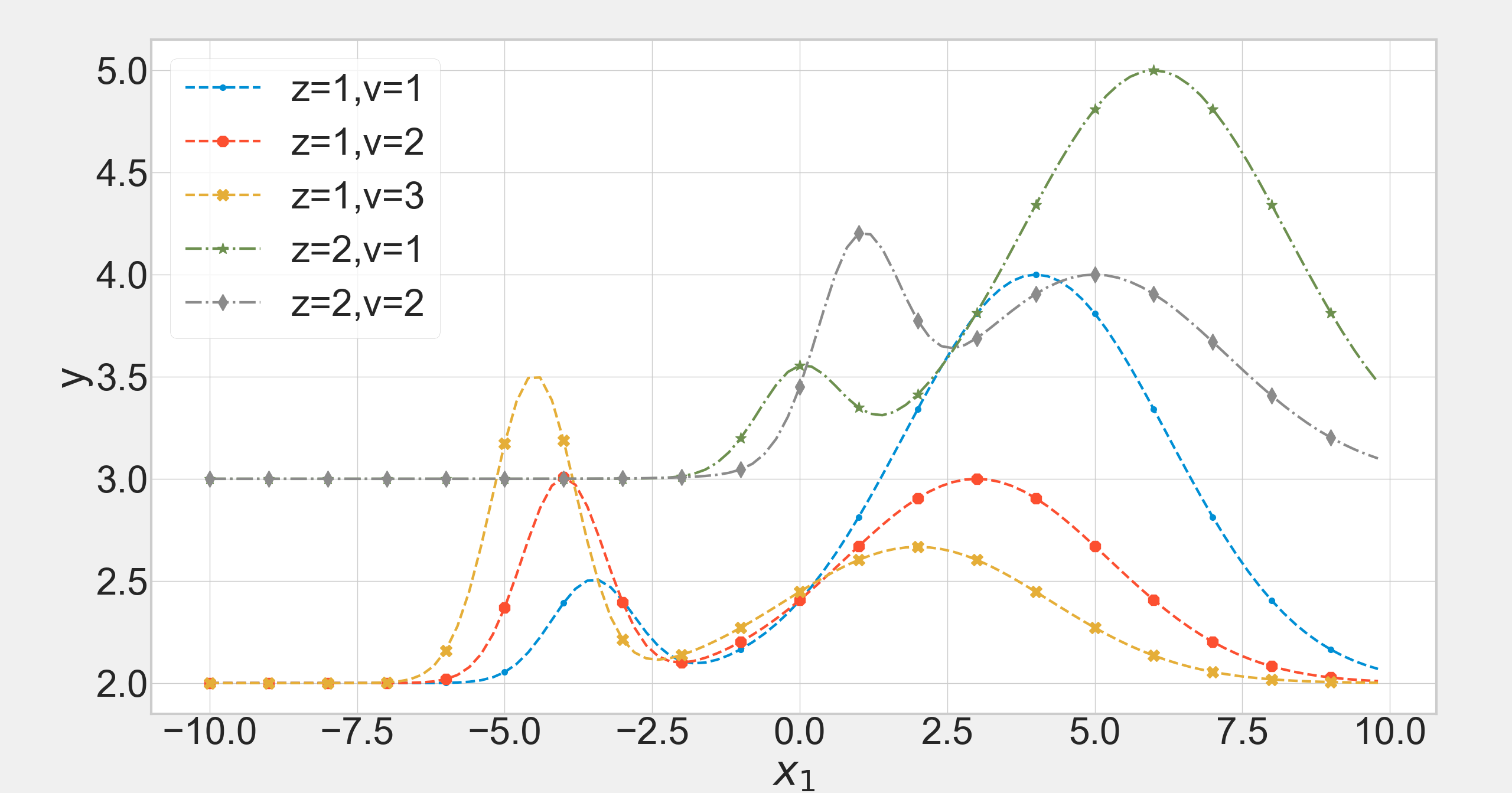}
    \caption{An illustration of the synthetic function with one branching parameter $z$, one correspond nested parameter $v$, and two quantitative parameters, $x_1$ and $x_2$.For the five different combinations of the branching and nested parameters, this picture shows the projected function onto $x_1$ at $x_2=0$.}
    \label{fig:1d vis of fun2d}%
\end{figure}

\captionsetup[figure]{font=small}
\begin{figure*}[!htb]
\begin{center}
\subcaptionbox{Sequential procedure.}{
\includegraphics[scale=0.10]{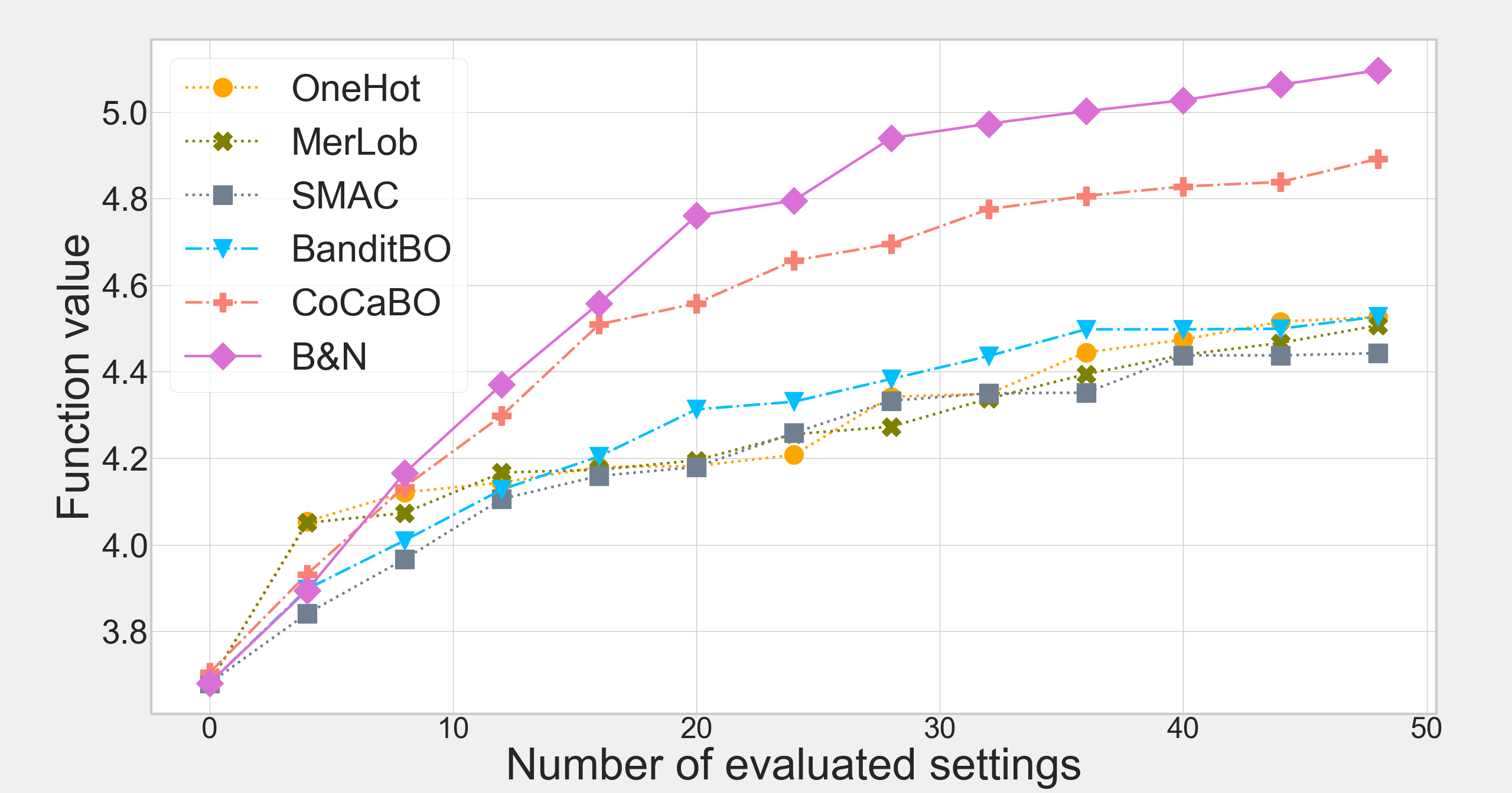}
\includegraphics[scale=0.10]{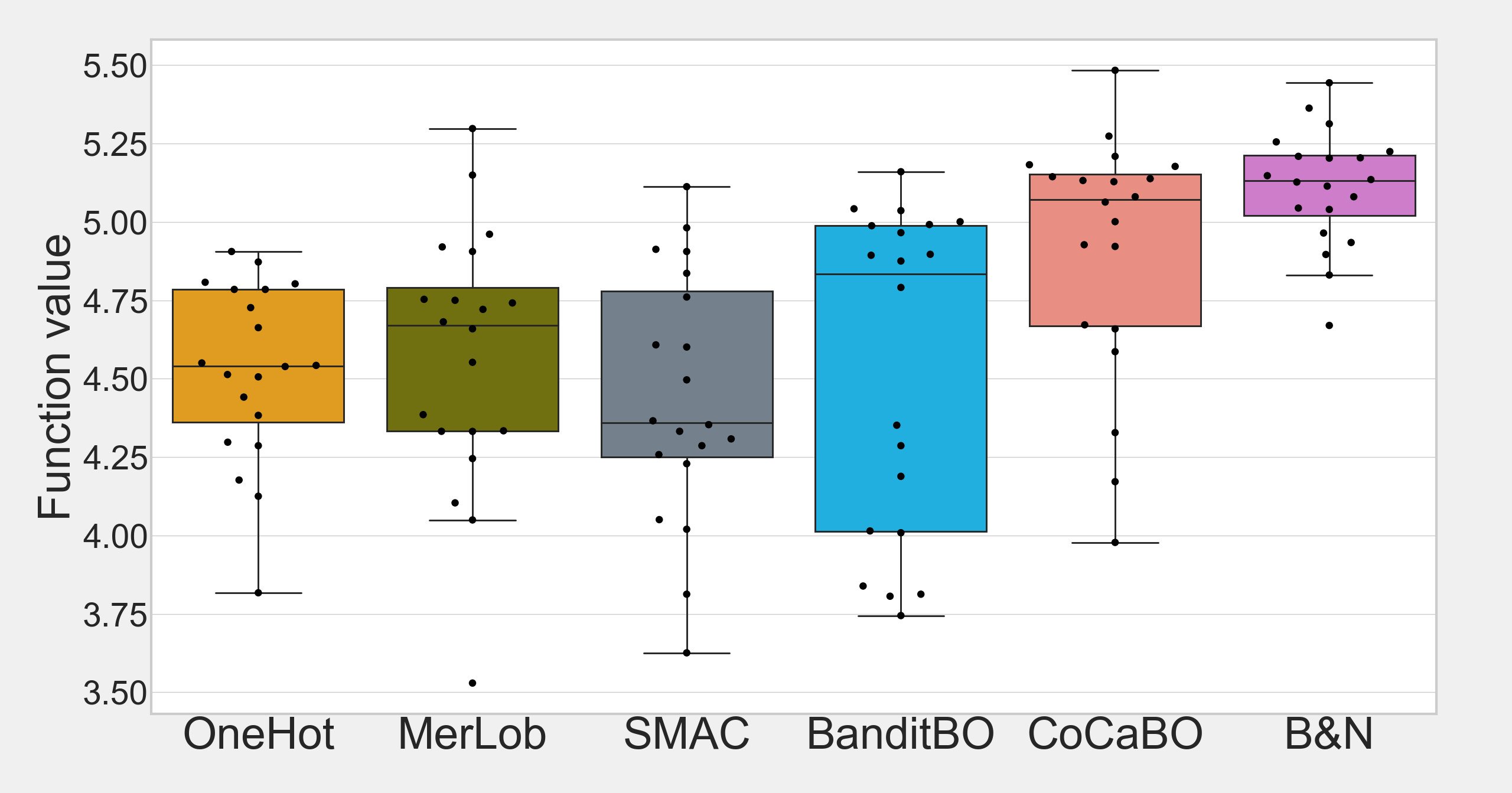}
}

\subcaptionbox{Batch procedure with batch size 5.}{
\includegraphics[scale=0.10]{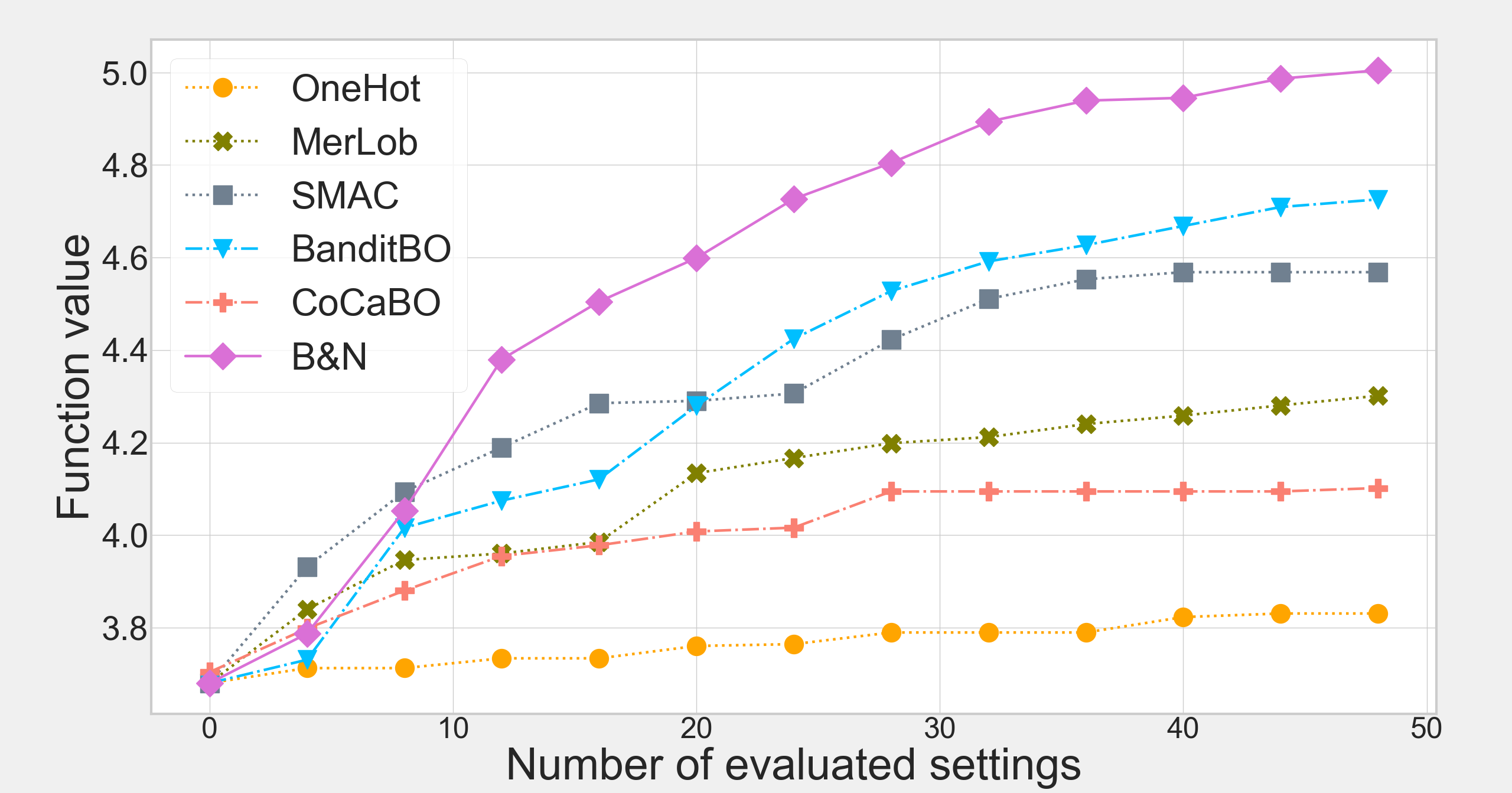}
\includegraphics[scale=0.10]{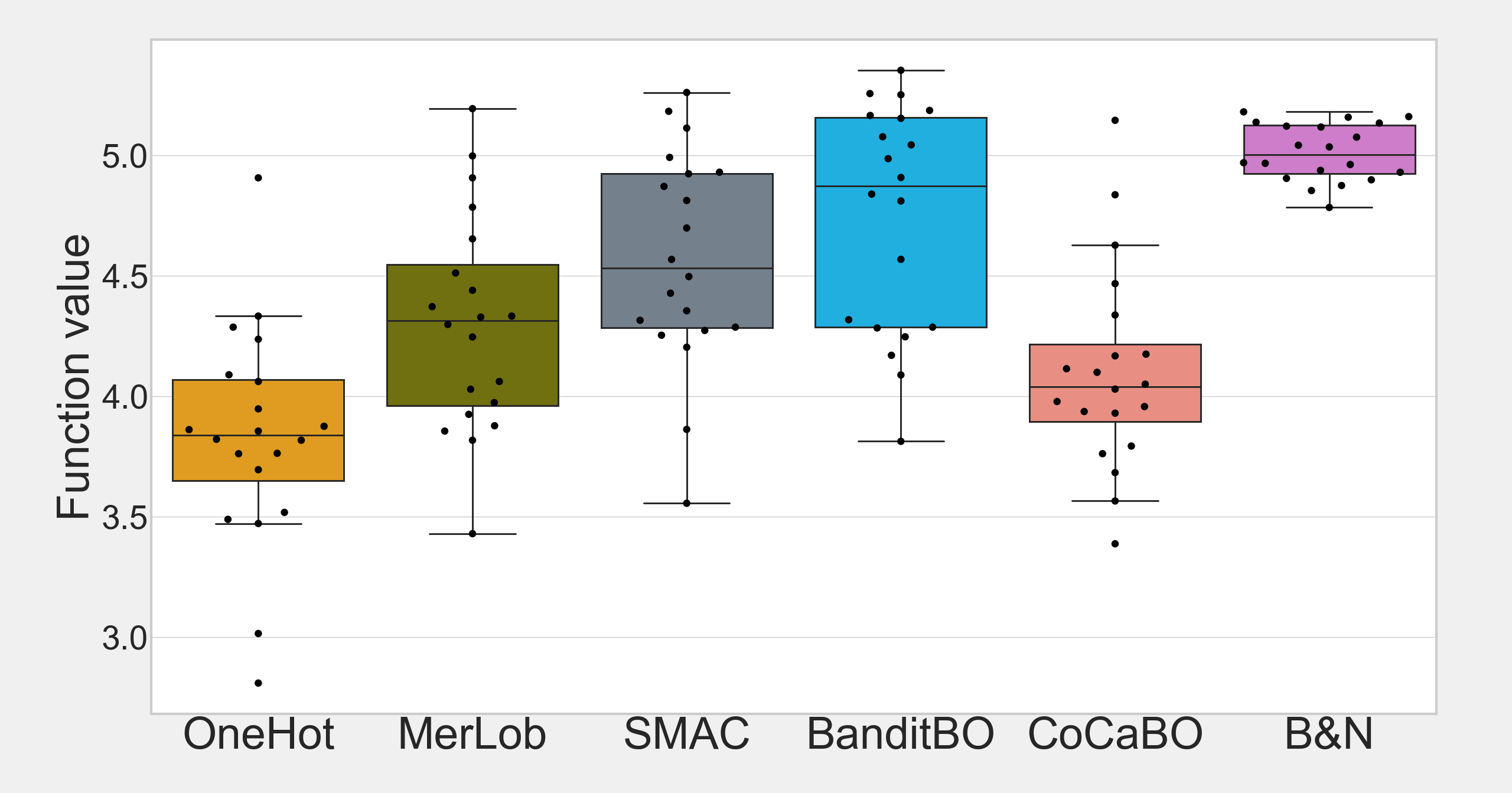}
}
\end{center}
    \centering

    \caption{Compare B$\&$N with five existing methods based on synthetic data. Starting from 10 randomly generated initials, two adaptive procedures, sequential and batch, are conducted to include additional 50 observations. The results based on the sequential procedure are shown in the upper panel and those for the batch procedure with batch size 5 are shown in the lower panel. The plots on the left
    show the best function value as the search progresses. B$\&$N appears to be able to identify the optimal setting and reaches the global optimal much faster than the other methods. For each method, the box-plots on the right are the final optimal results summarized from 20 replicates at the end of the search. The average optimal results found by B$\&$N consistently outperform the existing methods with a smaller variation.}
\label{fig:synfExpResult} %

\end{figure*} 

To visualize the simulation function, we illustrate the responses as a function of $x_1$ with respect to the five different combinations of the branching and nested parameters in Figure \ref{fig:1d vis of fun2d}. Each function in Figure \ref{fig:1d vis of fun2d} is a two-Gaussian mixture distribution with mean values $c_1$ and $c_2$. According to (\ref{fun:fun2d}), the setting of $z$ serves as the baseline and, depending on its setting, the nested parameters play different roles through the settings of $c_1$ and $c_2$. A normally distributed random error with zero mean and standard deviation 0.2 is incorporated into the simulation to examine the robustness of the proposed method. The goal here is to efficiently find the optimal parameter setting at {$(x_1,x_2,z,v)=(6,0,2,1)$} with the corresponding global maximum functional value $f=5$.

To the best of our knowledge, there is no existing work designed directly to address the branching and nested structure in hyperparameter optimization. Therefore, we consider five recent methods which are mainly developed for categorical parameters with slight modifications as alternatives, including the One-hot Encoding \citep{golovin2017google, gonzalez2016batch} denoted by \textit{OneHot}, an extension of the One-hot Encoding proposed by
\citet{garrido2020dealing} denoted by \textit{MerLob}, the \textit{SMAC} proposed
\citet{hutter2011sequential}, the \textit{BanditBO} proposed by \citet{nguyen2019bayesian}, and the \textit{CoCaBO} proposed by \citet{ru2020bayesian}. For all baseline methods, we used public codes released by authors. In this simulation, the five distinct combinations of branching and nested parameters are viewed as 5 different categories so that the existing methods can be applied. For example, BanditBO fits an independent Gaussian process model for each category. In the implementation of CoCaBO, the correlations due to different categories are computed by 
$R_c(h, h^*)=\mathbb{I}\{h=h^*\},$ 
where $\mathbb{I}\{\cdot\}$ is the indicator function and $h$, $h^*\in\{1,2,\dots 5\}$. Furthermore, the correlation between two inputs $(x,h)$ and $(x^*,h^*)$ is given by 
$(1-\lambda)\left[r_s(\bx, \bx^*)+r_c(h, h^*)\right] + \lambda r_s(\bx, \bx^*)r_c(h, h^*)$
with $\lambda\in[0,1]$. Thus, the resulting correlation is either $(1-\lambda)r_s(\bx, \bx^*)$ when $h\neq h^*$ or $r_s(\bx, \bx^*)+(1-\lambda)$ when $h=h^*$. Thus, the correlation function proposed by CoCaBO is a special case of Definition \ref{def:nestkernel} when there are only categorical parameters.

To perform the comparison, the five methods are applied to the same initial design with 10 parameter settings randomly chosen from the parameter space. Two active learning procedures, sequential and batch, are conducted to include an additional 50 observations. The sequential procedure includes additional points one-at-a-time and the batch procedure includes five additional points in each iteration with a total of 10 iterations. 
Based on 20 replicates of the two procedures, the performance of the six methods is compared by their efficiency in finding the global optimal.

Using the sequential procedure, the search results are demonstrated on the upper left panel in Figure \ref{fig:synfExpResult} and the final optimal solutions obtained by each method are summarized on the upper right panel. From the figures, it appears that the proposed B$\&$N method identifies the optimal setting and reaches the global optimal much faster than the other five alternatives. Based on the box plot, it shows that the average optimal result found by B$\&$N consistently outperforms the other methods with a smaller variation.

A similar analysis is conducted for the batch procedure and the results are summarized in the lower panels in Figure \ref{fig:synfExpResult}. Overall, the results of B$\&$N still outperform the rest of the methods in terms of search efficiency and the average optimal solution obtained in the final step. It also appears that the sequential procedure works slightly better than the batch procedure in this example, where the average optimal value is 5.11 for the sequential procedure and 5.01 for the batch procedure.

\section{APPLICATIONS IN DEEP NEURAL NETWORK}\label{sec:cnn}

\subsection{Hyperparameter Optimization}

\begin{table}[!htb]
    \small
    \caption{Tuning Parameters And The Search Space For ResNet And MobileNet.}
    \begin{tabular}{lll}
    \toprule
        \multicolumn{2}{c}{\textbf{Shared Variables}} & {\textbf{Search Space}} \\
    \midrule
        \multicolumn{2}{l}{Learning Rate} & (0.001, 1)\\
        \hline
        \multicolumn{2}{l}{Epoch} & (50, 200)\\
        \hline
        \multicolumn{2}{l}{Batch} & (64, 360)\\
        \hline
        \multicolumn{2}{l}{Momentum} & (0, 0.999)\\
        \hline
        \multicolumn{2}{l}{Weight Decay} & ($1\times10^{-6}$, 0.999)\\
    \midrule
    \multicolumn{2}{c}{\textbf{Branch Variables}} & {\textbf{Nested Variables}} \\
    \midrule                                
    \multirow{2}{*}{Network Type} & {ResNet}  & {Depth} = \{{18}, {34}, {50} or {101}\} \\ 
         & {MobileNet} & {Multiplier} = \{{0.25}, {0.5} or {1.0}\} \\
    \bottomrule
    \end{tabular}

    \label{tab:Resnet18TunePara}
\end{table}

To illustrate the benefits of the proposed method, we test the proposed method in a series of image classification experiments spanning a wide range of hyperparameters: learning rate, epoch, batch size, momentum, weight decay, network type, and network setting. Among them, network type and network setting are categorical and nested variables. The detailed parameter space is shown in Table \ref{tab:Resnet18TunePara}. We benchmark our method with a direct search for the best hyperparameters for training neural networks on two popular datasets, CIFAR-10 and CIFAR-100 \citep{krizhevsky2009learning}. CIFAR-10 contains 60,000 $32\times 32$ natural RGB images in 10 classes. Each class has 5000 training images and 1000 testing images. CIFAR-100 is like CIFAR-10, except it has 100 classes. Each class has 500 training images and 100 testing images. For network types, ResNet \cite{he2016deep} and MobileNet \cite{sandler2018mobilenetv2} are two popular network families on various tasks: classification, detection, tracking, etc. Each CNN model is trained on an NVIDIA V100 GPU with PyTorch 1.4 \cite{paszke2017automatic} .

\captionsetup[figure]{font=small}
\begin{figure*}[!h]
\vspace{6pt}
     \centering
     \includegraphics[scale=0.12]{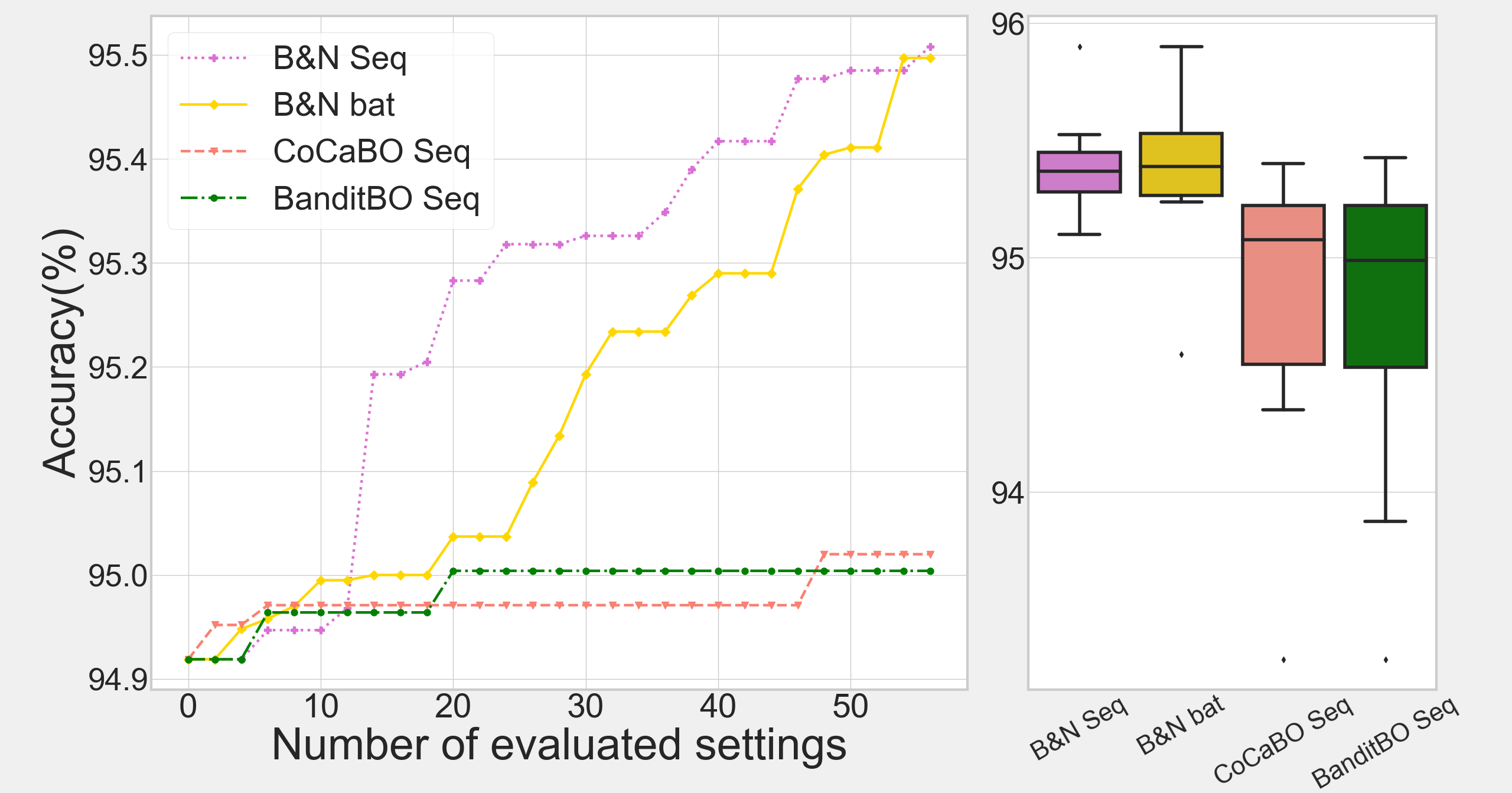}
     \centering
     \includegraphics[scale=0.12]{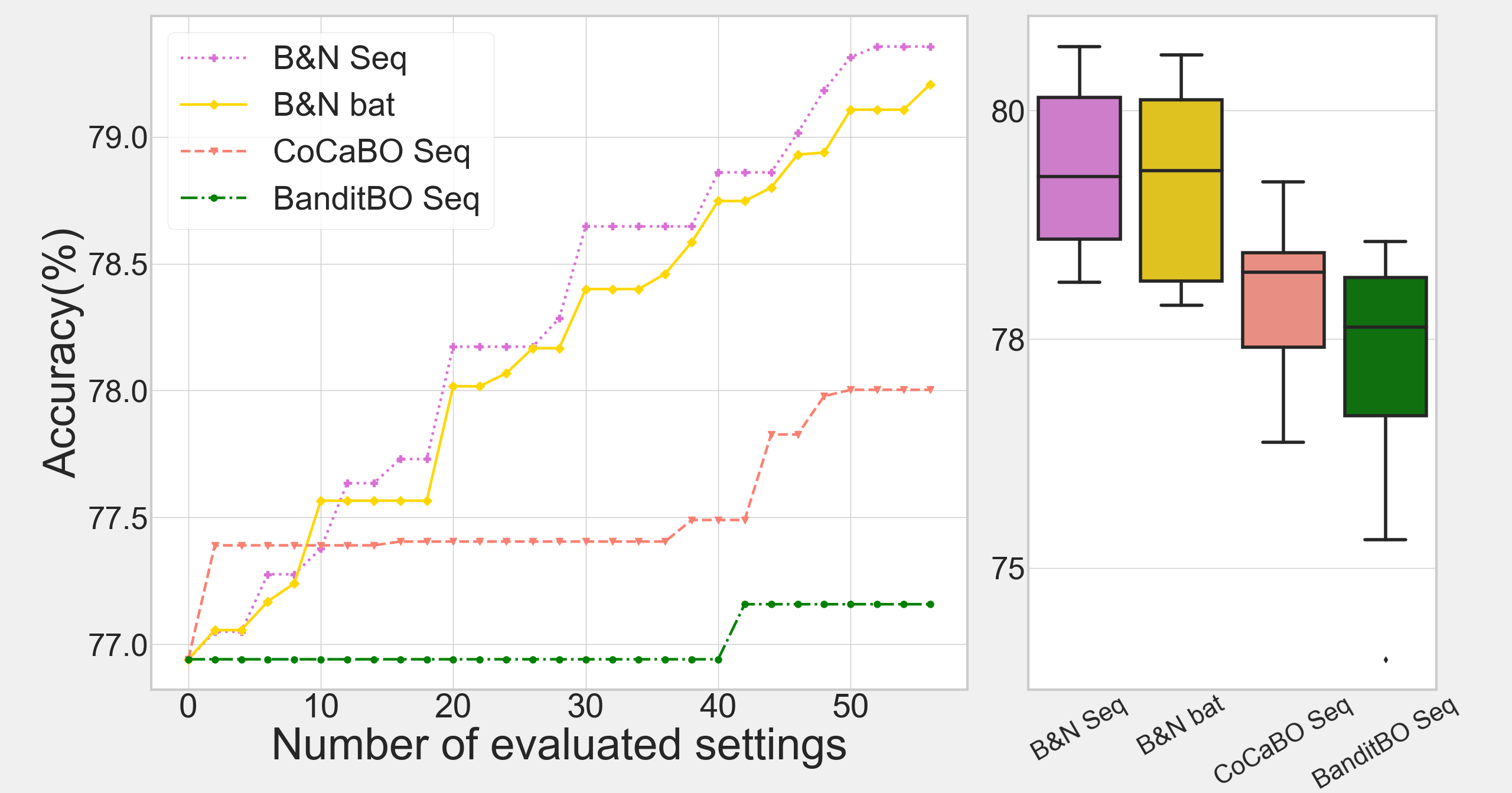}
    \vspace{6pt}
    \caption{Optimal tuning for CNN networks using datasets, CIFAR-10 (left) and CIFAR-100 (right). The proposed B$\&$N is implemented based on the sequential and the batch procedure with batch size 8. The performance is compared with CoCaBO and BanditBO using the sequential procedure, which is the most efficient alternative found by numerical studies. Starting from 28 initial settings randomly generated Latin hypercube designs, 56 additional settings are evaluated. The plots on the left show the best prediction accuracy as the search progresses. For each method, the box plots on the right are the optimal prediction accuracy summarized from 10 replicates at the end of the search. It appears that the B$\&$N procedures outperform CoCaBO and BanditBO, and the sequential procedure of B$\&$N converges to a stationary point faster than the batch procedure.}
    \label{fig:Realdata}
\end{figure*}

We compare B$\&$N with CoCaBO and BanditBO which appear to be the best alternatives found in Section 3.
For both methods, the analysis starts from the same initial design with 28 settings chosen by a randomly generated space-filling design called Latin hypercube design (LHDs) from the parameter space\citep{mckay2000comparison}. The total number of additional parameter combinations is set to be 56. Similar to Section 3, both sequential and batch procedures are considered for B$\&$N. The batch procedure includes eight additional points in each iteration, and seven iterations are performed. The experiment is repeated 10 times with different randomly generated LHDs, and the performance is summarized in Figures \ref{fig:Realdata}. 
The first and third plots in Figure \ref{fig:Realdata} show that the B$\&$N method can effectively propose new settings to obtain better accuracies as compared with CoCaBo and BanditBO under the same computational budgets. 
The second and fourth plots in Figure \ref{fig:Realdata} summarize the optimal prediction accuracy found at the end of the search. 
The second plot in Figures \ref{fig:Realdata} shows that B$\&$N method provides smaller variances, which implies that B$\&$N could achieve the desired stability while achieving better accuracy in CIFAR10 experiments. 
Comparing the sequential procedure with the batch procedure, the empirical results of both synthetic and real data indicate faster convergence rates by the sequential procedure. This results suggest that actively adding points by a reliable model can improve the search efficiency. 
Further studies are developed in the next sub-section to investigate the effects of the hyperparameters on the prediction accuracy.

Next, in Table \ref{tab:ExOptParams}, we highlight several optimal parameter settings found by B$\&$N based on a total of 84 evaluations of parameter combinations. For each dataset, two different parameter settings are demonstrated, and their corresponding classification accuracy is shown in the second row. The first setting for each dataset corresponds to the best performance found in 10 different initial designs. It appears that the proposed method provides a systematic search mechanism so that, even for a relatively complex problem such as CIFAR-100, a promising tuning parameter setting can be found within a limited computational effort. The second setting for each dataset corresponds to sub-optimal settings but with relatively low computational costs, i.e., smaller networks, smaller numbers of \textit{Epoch}, or \textit{Batch}. They are attractive in practice due to their computational advantages but not commonly recognized as promising settings. 
While we note that the accuracy from different papers are not directly comparable due to the use of different optimization and regularization approaches, it is still instructive to compare this result to others in the literature. Our best result on CIFAR-100 (80.70\%) is better than those of the recent paper with Knowledge Distillation method \citep{yuan2020revisiting} trained in the same network (79.43\% ResNet50) or a much larger network architecture (80.62\% DenseNet121 \cite{huang2017densely}). 
Inspired by these results, we will further develop more general objective functions that can incorporate other practical issues including computational cost.

\begin{table}[!t]
    \small
    \centering
    \caption{Examples Of The Optimal Settings Found By B$\&$N.}
    \begin{tabular}{lcccc}
    \toprule
     & \multicolumn{2}{c|}{\textbf{CIFAR-10}} &  \multicolumn{2}{c}{\textbf{CIFAR-100}} \\
    \midrule
     \textbf{Accuracy} & {\bf 95.92} & {\bf 95.66} & {\bf 80.70} & {\bf 79.61} \\
    \midrule
       Learning Rate  &  0.0113 &  0.0143	 & 0.0683 & 0.1866 \\
        \hline
         Epoch        & 185 & 141 & 185 & 180 \\
        \hline
         Batch        & 66 & 64 & 179 & 93\\
        \hline
         Momentum     & 0.75 & 0.56 & 0.57 & 0.49 \\
        \hline
         Weight Decay  & 0.0013 & 0.0193 & 0.0030 & 0.0017 \\
        \hline
         Network Type  & ResNet & ResNet & ResNet & ResNet\\
         \hline
         Depth  & 50 & 34 & 50 & 34\\ 
    \bottomrule
    \end{tabular}
 
    \label{tab:ExOptParams}
\end{table}

\captionsetup[figure]{font=small}
\begin{figure*}
\setlength\abovecaptionskip{0pt}
\begin{center}

\includegraphics[scale=0.18]{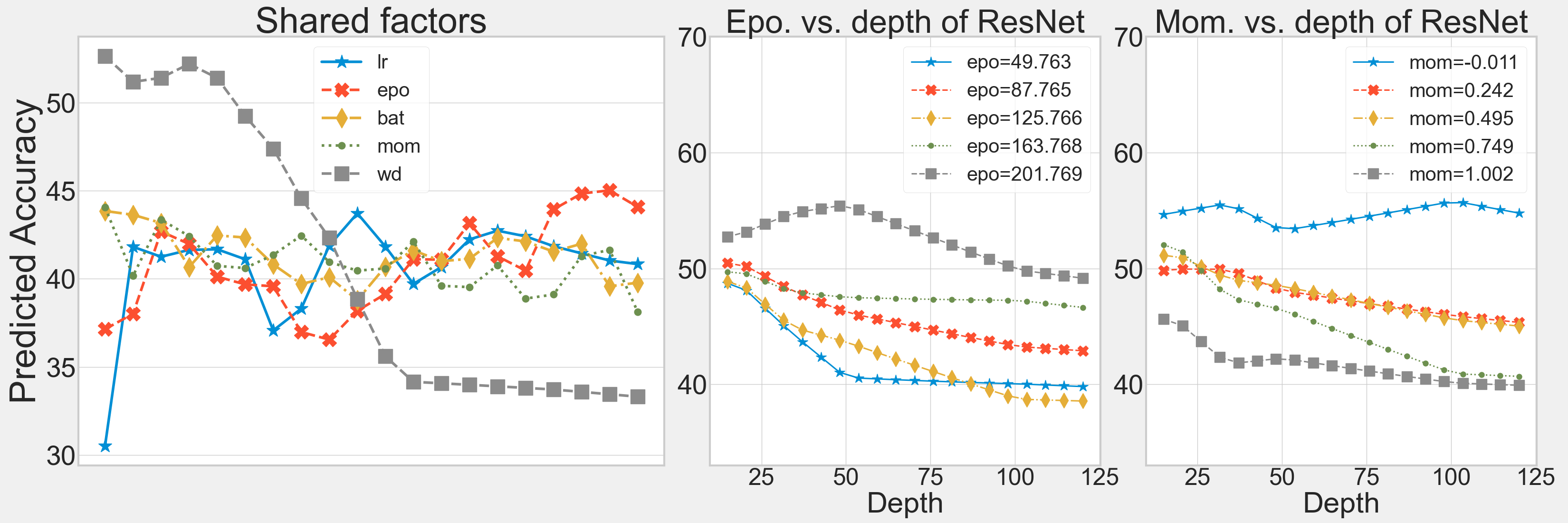}   
\end{center}
\vspace{8pt}
\centering
   \caption{Sensitivity Analysis. The left panel is the marginal main effects of the five shared variables. Weight decay, denoted by `wd', has the most significant decreasing effect on accuracy and an extremely low learning rate can lead to a low prediction accuracy. The middle panel is an illustration of the interaction effect between epoch and depth when ResNet is implemented. The setting of depth shows a slightly decreasing effect for smaller numbers of epoch but the effect becomes concave for larger numbers of epoch. The right panel shows the interaction between depth and momentum where the effect of depth increases slightly for a smaller momentum but decreases for a larger momentum.}
\label{fig:SensiAlys_3inrow}
\end{figure*}

\subsection{Sensitivity Analysis}
To explore the effects of the hyperparameters on the prediction accuracy, a sensitivity analysis using the Monte Carlo approach is performed based on CIFAR-100 with the proposed B$\&$N model \citep{sobol1993sensitivity, santner2003design}. The sensitivity analysis studies how sensitive the learning accuracy is to changes in the hyperparameters and evaluates which hyperparameters are responsible for the most variation in prediction accuracy. Three sensitivity plots are highlighted in Figures \ref{fig:SensiAlys_3inrow}. On the left panel of Figure \ref{fig:SensiAlys_3inrow}, the marginal main effects of the five shared variables are illustrated, where learning rate is denoted by `lr', epoch is denoted by `epo', batch size is denoted by `bat', momentum is denoted by `mom', and weight decay is denoted by `wd'. This plot shows that the setting of weight decay has a dominating effect on prediction accuracy as compared to the other shared variables, and a smaller weight decay is more likely to improve accuracy. This analysis also suggests that an extremely low learning rate should be avoided. Furthermore, an extensive two-factor interaction study is performed, and the detailed plots are given in the supplemental material. Among them, we highlight two significant interactions illustrated in the middle and the right panel of Figure \ref{fig:SensiAlys_3inrow}. In the middle panel, there is an interacting effect between the setting of epochs and the setting of depth when the ResNet structure is selected. For smaller numbers of epochs, such as the blue line for epo=49.763, the setting of depth shows a slightly decreasing effect. On the other hand, the effect becomes concave when larger numbers of epochs are implemented, such as the gray line for epo=201.769, and it reaches a higher prediction accuracy when the depth is chosen to be around 50. This is consistent with our findings in the optimal setting in Table \ref{tab:ExOptParams} where the number of epochs is relatively large (i.e., 185) and the depth is 50. On the right panel of Figure \ref{fig:SensiAlys_3inrow} is an interaction plot between the setting of momentum and the setting of depth where the effect of depth increases slightly for a smaller momentum (such as the blue curve) but decreases for a larger momentum (such as the gray curve). Generalizations to a broader range of datasets are yet to be explored; however, the findings from this sensitivity analysis shed light on the complexity of interactions in hyperparameter optimization.

\section{CONCLUSIONS AND OPEN PROBLEMS}

In this paper, we introduce the notion of branching and nested parameters which captures a conditional dependence commonly that occurs in parameter tuning for deep learning applications. A new kernel function is proposed for branching and nested parameters, and sufficient conditions are derived to guarantee the validity of the kernel function and the resulting reproducing kernel Hilbert space. Furthermore, a unified Gaussian process model and an expected improvement criterion are developed to achieve efficient global optimization when branching and nested tuning parameters are involved. The convergence rate of the proposed Bayesian optimization framework is proven under the continuum-armed-bandit setting, which provides an analogy to the convergence results with quantitative parameters.

An efficient initial design with branching and nested parameters is essential for Bayesian optimization. Therefore, an interesting area that deserves further study is the design of initial experiments with branching and nested parameters. Given the promising results observed by the implementation of randomly generated Latin hypercube designs in the current study, it will be interesting and important to further extend the idea of space-filling for the design of branching and nested parameters.

\clearpage
\appendix
\section{PROOFS}
\subsection{Proof of Theorem 1}
    \begin{theorem}
    \label{thm:validkernel}
    Suppose that there are $g^b_j$ levels in the nested variable $v_j^{b}$ which is nested within the branching variable $z_k=b$, for any $b\in\{1,2,\dots,l_k\}$. The kernel function in (4) is symmetric and positive definite if the hyperparameter $\boldsymbol{\phi}_k$ satisfy:
    \begin{align}
               \min_{b} \left[ \exp(-\phi^b_{kj})+\left(1-\exp(-\phi^b_{kj})\right)/g^b_j\right]
               \geq 
               \exp(-\gamma_k), \nonumber
     \end{align}          
    for all $j \in \{1,2,\ldots, m_k\}$ and $k\in\{1,2,\ldots, q\}$. 
    \end{theorem}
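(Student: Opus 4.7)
The plan is to exploit the product structure of the kernel in~(\ref{eq:productkernel}) and reduce positive definiteness to a within-branch spectral calculation. Symmetry is immediate from Definition~\ref{def:nestkernel} and the symmetry of $R_{\boldsymbol{\theta}}$, $R_{\boldsymbol{\gamma}}$, and $d(\cdot,\cdot)$. For positive definiteness, $R_{\boldsymbol{\theta}}$ is a standard Mat\'ern or exponential kernel and hence PSD, and by the Schur product theorem (products of PSD kernels are PSD) it suffices to show $R_{\boldsymbol{\gamma}}(\mathbf{z},\mathbf{z}')\,R_{\boldsymbol{\phi}}(\mathbf{v},\mathbf{v}')$ is PSD. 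Because both $R_{\boldsymbol{\gamma}}$ and $R_{\boldsymbol{\phi}}$ factor over $k=1,\ldots,q$, a second application of the Schur product theorem reduces the task to showing, for each fixed $k$, that the single-branch kernel
\[
M_k\bigl((z_k,\mathbf{v}^{z_k}),(z'_k,\mathbf{v}'^{z'_k})\bigr)=\exp\{-\gamma_k\mathds{1}\{z_k\neq z'_k\}\}\,R_{\boldsymbol{\phi}_k}(\mathbf{v}^{z_k},\mathbf{v}'^{z'_k})
\]
is positive semidefinite under the stated condition.

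Next, fix $k$ and write $c=\exp(-\gamma_k)$. For an arbitrary finite sample, partition the indices by the value of $z_k$ into disjoint sets $\mathcal{I}_1,\ldots,\mathcal{I}_{l_k}$. By construction the Gram matrix of $M_k$ has entry equal to $c$ whenever the two indices lie in different $\mathcal{I}_b$, and entry equal to the nested-kernel value $K^{(b)}[i,i']=\prod_{j=1}^{m_k}\exp(-\phi^b_{kj}d(v^b_{j,i},v^b_{j,i'}))$ whenever both indices lie in the same $\mathcal{I}_b$. Letting $J^{(b)}$ be the all-ones matrix of size $|\mathcal{I}_b|$, the Gram matrix therefore decomposes as
\[
G_k\;=\;c\,\mathbf{1}\mathbf{1}^{\top}\;+\;\bigoplus_{b=1}^{l_k}\bigl(K^{(b)}-cJ^{(b)}\bigr),
\]
where $\bigoplus$ denotes block direct sum. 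Since $c\,\mathbf{1}\mathbf{1}^{\top}\succeq 0$, it suffices to establish $K^{(b)}-cJ^{(b)}\succeq 0$ for every branching level $b$.

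The central step is a spectral analysis of each single-nested-variable factor $K^{(b,j)}$, whose entries are $1$ on same-level pairs of $v^b_j$ and $\exp(-\phi^b_{kj})$ on cross-level pairs. Grouping the indices of $\mathcal{I}_b$ according to the $g^b_j$ levels of $v^b_j$, with $N_\ell$ points at level $\ell$, yields the decomposition $K^{(b,j)}=(1-\exp(-\phi^b_{kj}))\,\mathrm{BlockDiag}(J_{N_\ell})+\exp(-\phi^b_{kj})J^{(b)}$. Passing to the orthonormal basis of normalized level-indicator vectors $\{e_\ell/\sqrt{N_\ell}\}$, the action of $K^{(b,j)}-cJ^{(b)}$ on the span of these vectors becomes $(1-\exp(-\phi^b_{kj}))\,\mathrm{diag}(N_\ell)+(\exp(-\phi^b_{kj})-c)\,\mathbf{w}\mathbf{w}^{\top}$ with $\mathbf{w}=(\sqrt{N_\ell})$, while on the orthogonal complement the action is $(1-\exp(-\phi^b_{kj}))\,\mathrm{Id}\succeq 0$. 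Minimizing the smallest eigenvalue of the restricted matrix over admissible multiplicities $\{N_\ell\}$ shows that the worst case is equal occupancy $N_\ell\equiv|\mathcal{I}_b|/g^b_j$, yielding exactly the PSD condition $c\leq\exp(-\phi^b_{kj})+(1-\exp(-\phi^b_{kj}))/g^b_j$; taking a minimum over $b$ recovers the hypothesis of the theorem.

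I expect the main obstacle to be lifting the per-factor inequalities $K^{(b,j)}\succeq cJ^{(b)}$ to the full Hadamard product $K^{(b)}=\bigodot_{j=1}^{m_k}K^{(b,j)}\succeq cJ^{(b)}$. The Schur product theorem preserves positive semidefiniteness but does \emph{not} automatically preserve inequalities of the form $A\succeq cJ$, so this step must be handled by an inductive argument that exploits both the entrywise bound $K^{(b,j)}[i,i']\in[c,1]$ and the rank-one structure of $cJ^{(b)}$, expanding $\bigodot_j\bigl(cJ^{(b)}+(K^{(b,j)}-cJ^{(b)})\bigr)$ and grouping the resulting Hadamard summands so that, after subtracting $cJ^{(b)}$, the residual is recognizable as a sum of Schur products of PSD matrices. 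The bookkeeping of this expansion, together with verifying that the negative scalar contribution $(c^{m_k}-c)J^{(b)}$ is absorbed by the positive terms coming from the single-factor analysis, is the delicate part of the argument.
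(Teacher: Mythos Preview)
Your block decomposition $G_k=c\,\mathbf{1}\mathbf{1}^\top+\bigoplus_b(K^{(b)}-cJ^{(b)})$ is exactly the paper's route, which phrases the same reduction via the Generalized Compound Symmetric (GCS) characterisation of \cite{roustant2020group}: positive definiteness of $G_k$ is equivalent to positive definiteness of each within-branch block $W_b$ (automatic, since $W_b$ is compound symmetric with off-diagonal in $(0,1)$) together with positive definiteness of the $l_k\times l_k$ matrix of block averages, which here collapses to the scalar inequality $\overline{W_b}\ge \overline{B}=c$. Your computation $\overline{W_b}=\exp(-\phi^b_{kj})+(1-\exp(-\phi^b_{kj}))/g^b_j$ matches the paper's. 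The paper works on the canonical Gram matrix over all distinct level combinations (your $N_\ell\equiv 1$ case), which for qualitative variables is equivalent to treating arbitrary finite samples.

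The obstacle you flag for $m_k>1$ is a genuine gap, and the paper's proof has it too: that argument is written for a \emph{single} pair $(z_k,v_j^{z_k})$ and is silent on how to combine over $j$. Your proposed inductive Hadamard expansion cannot close it under the stated hypothesis. First, the entrywise bound $K^{(b,j)}[i,i']\in[c,1]$ you invoke is equivalent to $\exp(-\phi^b_{kj})\ge c$, i.e.\ to the \emph{stronger} condition of Remark~1, not to the theorem's condition. Second, and decisively, the per-$j$ condition is in fact insufficient when $m_k>1$: with $l_k=2$, $m_k=2$, $g^b_j=2$, $\exp(-\phi^b_{kj})=0$ and $c=\exp(-\gamma_k)=0.4$, the hypothesis holds ($1/2\ge 0.4$ for every $j$), yet the canonical $8\times 8$ Gram matrix has diagonal blocks $I_4$ and off-diagonal blocks $0.4\,J_4$, with eigenvalue $1-4(0.4)=-0.6$ on $(\mathbf{1}_4,-\mathbf{1}_4)^\top$. (Even Remark~1 fails: take $m_k=2$, $g^b_j=3$, $\exp(-\phi^b_{kj})=c=0.5$.) Applying the GCS/block argument directly to the full nested vector $\mathbf{v}^{z_k}$, so that $K^{(b)}=\bigotimes_j W_{bj}$ on the canonical space and hence $\overline{W_b}=\prod_j\overline{W_{bj}}$, yields the \emph{product} condition
\[
\prod_{j=1}^{m_k}\Bigl[\exp(-\phi^b_{kj})+(1-\exp(-\phi^b_{kj}))/g^b_j\Bigr]\ \ge\ \exp(-\gamma_k)\qquad\text{for every }b,
\]
which is what your decomposition actually proves and which coincides with the stated hypothesis only when $m_k=1$.
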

\begin{proof}
First of all, the matrix is clearly symmetric. 
To show it is positive definite, consider a pair of branching and nested variable, $(z_k, v_j^{z_k})$. There are in total $g(v_j^{z_k}) := \sum_{b=1}^{l_u}g_b(v_j^{z_k})$ possible outcomes. For simplicity, we use $l$, $g$ and $g_b$ to stand for $l_u$, $g(v_j^{z_k})$ and $g_b(v_j^{z_k})$ respectively. Then compute the correlation matrix, $\*T$, formed by these $g$ outcomes. By rearranging the order, $\*T$ can be written as a block matrix with $l\times l$ blocks:
    \begin{align}
    \label{corrmat:GCS}
        \*T =
        \begin{pmatrix}
            \*W_1 & \*B_{1,2} & \dots & \*B_{1,l}\\
            \*B_{2,1} & \*W_2 & \dots & \*B_{2,l}\\
            \vdots & \vdots & \vdots & \vdots \\
            \*B_{l, 1} & \*B_{l, 2} & \dots & \*W_{l}
        \end{pmatrix}.
    \end{align}
    In (\ref{corrmat:GCS}), $\*W_b$ is the within-group correlation which captures the correlation matrix of observations with $z_k=b$. The off-diagonal blocks, $\*B_{i,j}$, are the between-group correlation which contains the correlation of pairs of observations with $z_k$ value equals to $i$ and $j$ respectively. 
    Denote $\overline{W_i}$ and $\overline{B_{i,j}}$ as the the average of all elements of $\*W_i$ and $\*B_{i,j}$ respectively and let
    \begin{align}
        \widetilde{\*T}= 
        \begin{pmatrix}
            \overline{W_1} & \overline{B_{1,2}} & \dots & \overline{B_{1,l}}\\
            \overline{B_{2,1}} & \overline{W_2} & \dots & \overline{B_{2,l}}\\
            \vdots & \vdots & \vdots & \vdots \\
            \overline{B_{l,1}} & \overline{B_{l,2}} & \dots & \overline{W_{l}}\\
        \end{pmatrix}.
    \end{align}
    $\*T$ is called a Generalized Compound Symmetric(GCS) matrix in \cite{roustant2020group} and it has been proved that  $\*T$ is  positive definite if and only if both $\widetilde{\*T}$ and   $\*W_i$ are positive definite.
    Note that, based on the correlation function assumed in equation (5) of the main manuscript, we have $\*B_{i,j}\equiv\*B$ and $\overline{B_{i,j}}\equiv\overline{B}$. Hence we have 
    \begin{align}
        \widetilde{\*T} = \overline{B}\bm{1}_{l}\bm{1}_{l}^T + \textit{diag}(\overline{W_1}-\overline{B}, \dots, \overline{W_{l}}-\overline{B}).
    \end{align}
    It is now clear that $\widetilde{\*T}$ is positive definite if each $\overline{W_b}-\overline{B}$ is positive definite, which is satisfied by our condition (6).
\end{proof}
\vfill

\subsection{Proof of Theorem 2}
\begin{theorem}{Assume that $f$ follows a Gaussian process with the kernel function defined in (4), where $R_{\boldsymbol{\theta}}$ is a Mat\'ern kernel with the smoothness parameters $\nu>0$ and some $\alpha\geq 0$ depends on $\nu$. Let $y^*_n=max_{1\leq i\leq n}y_i$, we have
\begin{align*}
    \sup_{\|f\|_{\mathcal{H}(\mathcal{S})}\leq S}\mathbb{E}&\left\{y^*_{n}-\max_{\mathbf{x}\in\mathcal{X}} f(\mathbf{x})|D_n\right\}\\
    =& \mathcal{O}\left(L^{\nu/d}(n/\log{n})^{-\nu/{d}}(\log n)^{\alpha}\right).
\end{align*}
}
\end{theorem}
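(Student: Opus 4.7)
The plan is to adapt the convergence analysis of \citet{bull2011convergence} from the purely quantitative Mat\'ern setting to the product kernel setting of equation (4). The key structural observation is that the branching variables $\mathbf{z}$ take values in a finite set of size $L \leq \prod_{k=1}^q l_k$, and within each branching level the nested variables $\mathbf{v}$ also take values in a finite (or compactly bounded) set. Hence the feasible domain $\mathcal{X}$ stratifies into at most $L$ slices, each of which is, up to a fixed low-dimensional reduction, a continuous domain in $\mathbf{w} \in \mathbb{R}^d$ equipped with the Mat\'ern kernel $R_{\boldsymbol{\theta}}$.

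First I would establish an RKHS decomposition. Using the tensor-product theorem for reproducing kernels, the product kernel $R = R_{\boldsymbol{\theta}}\cdot R_{\boldsymbol{\gamma}}\cdot R_{\boldsymbol{\phi}}$ induces an RKHS that can be identified with a direct sum indexed by the branching/nested combinations, where each summand is isometric (up to a multiplicative constant) to the Mat\'ern RKHS $\mathcal{H}(R_{\boldsymbol{\theta}})$ in $\mathbf{w}$. The discrete kernels $R_{\boldsymbol{\gamma}}$ and $R_{\boldsymbol{\phi}}$ contribute only bounded multiplicative factors lying between $\exp(-\sum_k\gamma_k)$ and $1$; this is where the factor $L^{\nu/d}$ will ultimately enter, since the effective norm on each slice can be inflated by at most these constants and there are $L$ of them.

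Second, I would adapt Bull's bound on the posterior standard deviation. For a Mat\'ern kernel on a compact subset of $\mathbb{R}^d$ with smoothness $\nu$, the posterior standard deviation scales as $\mathcal{O}(h_n^\nu)$ where $h_n$ is the fill distance of the sampled points. The $\epsilon$-greedy $\widetilde{\text{EI}}$ recipe recalled in Section 2.1 draws a uniformly random point with probability $\epsilon$, which controls the fill distance across strata: each of the $L$ strata receives $\Theta(\epsilon n/L)$ random points and thus within-stratum fill distance $\mathcal{O}((L\log n/n)^{1/d})$, giving a per-stratum posterior-standard-deviation bound $\mathcal{O}((L\log n/n)^{\nu/d})$. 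Combining with Bull's bound on the expected simple regret in terms of the posterior standard deviation yields the rate stated in the theorem, with the $(\log n)^\alpha$ factor absorbing the polylogarithmic penalty from plugging in the restricted MLE of $(\boldsymbol{\theta},\boldsymbol{\gamma},\boldsymbol{\phi})$. The constraint region derived in Theorem~1 is exactly what guarantees the restricted MLE stays in a compact feasible set, so that Bull's uniform-in-hyperparameters arguments carry over.

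The main obstacle will be the cross-stratum analysis. Unlike the purely continuous case, a sample with branching label $\mathbf{z}$ provides only weakly damped information (by a factor of at most $\exp(-\gamma_k)$) about function values in a stratum with a different branching label, so the per-stratum posterior variance bound cannot be improved by pooling across strata. Making this precise requires a block-matrix analysis of the Gram matrix $\mathbf{K}_n$ that exploits the generalized compound-symmetric structure used in the proof of Theorem~1, and then showing that the aggregation of the $L$ per-stratum rates incurs exactly the $L^{\nu/d}$ prefactor rather than something worse. A subordinate technical point is ensuring the feasibility constraint from Theorem~1 does not force $\boldsymbol{\phi}$ against its boundary, which would otherwise blow up the implicit constants.
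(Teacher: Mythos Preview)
Your proposal is essentially correct and follows the same high-level strategy as the paper: stratify $\mathcal{X}$ into the $L$ branching/nested slices, use the $\epsilon$-greedy draws to guarantee each slice receives $\Theta(\epsilon n/L)$ uniformly random points (via a Chernoff bound and union bound over the $L$ slices), bound the within-slice fill distance, and then invoke Bull's Lemma~8 together with the Mat\'ern scattered-data bound of \citet{narcowich2003refined} to control the posterior standard deviation at the EI-selected point. The $L^{\nu/d}$ prefactor arises exactly as you say, from the per-slice sample size being $\Theta(n/L)$.

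The one place you are over-engineering is the ``main obstacle'' of cross-stratum analysis. The paper does \emph{not} carry out a block-matrix or generalized-compound-symmetric analysis of $\mathbf{K}_n$ here; it simply uses the elementary monotonicity of GP posterior variance in the conditioning set. Concretely, if $D_{n_l}$ denotes the subset of observations with $(\mathbf{z},\mathbf{v})=l$, then $s_{m-1}(\mathbf{x}_m\mid\widehat{\boldsymbol\theta}_m)\le s_{n_l}(\mathbf{x}_m\mid\widehat{\boldsymbol\theta}_m)$ because throwing away data can only increase posterior variance. On the slice $\{(\mathbf{z},\mathbf{v})=l\}$ the discrete factors $R_{\boldsymbol\gamma}$ and $R_{\boldsymbol\phi}$ are identically $1$, so the restricted kernel is exactly the Mat\'ern kernel in $\mathbf{w}$, and the Narcowich--Ward--Wendland bound applies directly. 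This sidesteps both the RKHS tensor decomposition and the block-Gram analysis you outline; your route would work, but it is heavier machinery than required.
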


\begin{proof}
Denote $L$ as the number of all possible choices for branching and nested variables, and $\widetilde{n}:=\frac{n\epsilon}{4L}$ and $n_l$ as the number of points whose $(z,v)=l$. Suppose $k>L$ initials points are selected independently from $f$, and $n>2k$. Then consider the events $A_{n,l}^c:=\{\tilde{n}$ points are selected uniformly random with $(z,v)=l$ among $x_{k+1}\dots x_n\}$ and $A_n=\bigcap_{l=1}^L A_{n,l}$. By Chernoff inequality and a union bound, we have
    \[\mathbb{P}\{A_n^c\}\leq \sum_{l=1}^L\mathbb{P}\{A_{n,l}^c\}\leq L\exp(-\frac{\epsilon n}{16L}).\]

Denote $B_n:=\{$at least one point among $x_{n+1}\dots x_{2n}$ is selected by EI$\}$, then $\mathbb{P}\{B_n^c\}=\epsilon^n$. In total, we have $\mathbb{P}\{A^c_n\bigcup B^c_n\}$ decays in $n$ exponentially.

For an input setting $x:=(w, z, v)$, denote a compact subset $\mathcal{X}_w\subset \mathbb{R}^d$ as the design space of shared factor $w$, and denote the mesh norm $h_{n}=sup_{w\in\mathcal{X}_w}\min_{1\leq i \leq {n}}\|w-w_i\|_2$. Set $r_n:=(n/\log n)^{-\nu/d}(\log n)^{\alpha}$, then since $h_n$ is non-incresing in $n$ and by Lemma 12 in \cite{bull2011convergence}, one has the following inequalities:
\[
\mathbb{P}\{E^c_n\bigcap B_n\bigcap A_n\}\leq
\sum_1^L\mathbb{P}\{E^c_{n,l}\bigcap B_n\bigcap A_n\}\leq
C_1 Lr_{\tilde{n}}
\]
where $E_n:=\{h_n\leq C_0(n/\log n)^{-1/d}\}$ and $C_0, C_1$ are constants.%

For $E_n\bigcap B_n \bigcap A_n$, there is a point $m$, where $n<m\leq 2n$, chosen by EI and $w.l.o.g$ assume $(z_m, v_m)=l$. By Lemma 8 in \cite{bull2011convergence}, we have
\begin{align*}
    y^*-y^*_{m-1} &\leq EI_{m-1}(x^*) + R_m s_{m-1}(x_m|\widehat{\theta_m})\\
    &\leq y^*_{m}-y^*_{m-1} \\
    & \;\;\; + (R_m+\widehat{\sigma^2_m})s_{m-1}(x_m|\widehat{\theta_m})+R_m s_{m-1}(x_m|\widehat{\theta_m}),
\end{align*}
where $S_n=\|f\|_{\mathcal{H}_{\hat{\theta}_n}}$ and $\widehat{\sigma^2_n}=(Y_n-\widehat{\mu_n})^T\widehat{K_n^{-1}}(Y_n-\widehat{\mu_n})$. By Lemma 4 and Corollary 1 from \cite{bull2011convergence}, both $S_n$ and $\widehat{\sigma^2_n}$ are bounded by $S'=S\prod_{i=1}^d(\theta_i^U/\theta_i^L)$. Moreover, denote $D_{n_l}$ as the slice of observations with $D_n|_{(z,v)=l}$, it follows that $s_{m-1}(x_m|\widehat{\theta_m}) \leq \max_L s_{n_l}(x_m|\widehat{\theta_m})$. By \cite{narcowich2003refined}, we have 
$s_{n_l}(x|\*\theta)=\mathcal{O}(M(\*\theta)h_{n_l}^\nu)$ uniformly for $\*\theta$, where $\theta_i^L\leq \theta_i\leq\theta_i^U$, $1\leq i\leq d$ and $M(\*\theta)$ is continuous in $\*\theta$. Also note that $(n/\log n)^{-1/d}$ decreases in $n$, hence $s_{m-1}(x_m|\widehat{\theta_m}) \leq \max_L s_{n_l}(x_m|\widehat{\theta_m})\leq C_2 r_{\tilde{n}}$, where $C_2$ depends on $D_w$, $R$, $C_0$, $\*\theta^L$ and $\*\theta^U$.
Therefore, $y^*-y^*_{m-1} \leq y^*_{m}-y^*_{m-1} + 3C_2 S' r_{\widetilde{n}}$ and $y^*-y^*_{2n}\leq y^*-y^*_{m}\leq 3C_2 S'r_{\widetilde{n}}$.

Lastly, for the event of $E_n^c$, $y^*-y^*_{2n}\leq 2\norm{f}_{\infty}\leq 2S$. Combine the arguments above, we have
\begin{align*}
    \mathbb{E}\{y^*_{2n+1}-y^*\} 
    &\leq \mathbb{E}\{y^*_{2n}-y^*\} \\
    &\leq 2S \left(\mathbb{P}\{E_n^c\bigcap A_n \bigcap B_n\}\right. \\
    & ~~~~ \left.+ \mathbb{P}\{A_n^c \bigcup B_n^c\}\right) + 3C_2S'r_{\widetilde{n}}\\
    &= 2S\left(C_1L r_{\tilde{n}} + L\exp(-\frac{\epsilon n}{16 L}) + \epsilon^n\right) \\
    & ~~~~ + 3C_2S^{'}r_{\tilde{n}}\\
    &= \mathcal{O}(L^{v/d}(n/\log n)^{-\nu/d}(\log n)^\alpha).
\end{align*}
The results follows as the bound is uniform in $f$ with $\|f\|_{\mathcal{H}_{\theta^U}}\leq S$.
\end{proof}

\section{SENSITIVITY ANALYSIS}
In Section 4.2 of the main manuscript, a sensitivity analysis is performed. In this subsection, two-factor interaction plots for the shared factors are provided in Figure \ref{fig:sensialys-sharedfac}. In general, there are two slightly larger two-factor interactions: one is between momentum and batch size (the third picture in the fourth row), and the other is between weight decay and epoch (the second picture in the last row). It appears that the effect of momentum is mild for smaller batch sizes but becomes positive for larger batch sizes. On the other hand, the effect of weight decay, especially for smaller values of the weight decay, changes with respect to the epoch settings.  

\begin{figure*}
\centering
    \subcaptionbox{
    Two-factor interaction effect between `learning rate' and the other shared variables
    }{
    \includegraphics[scale=0.13]{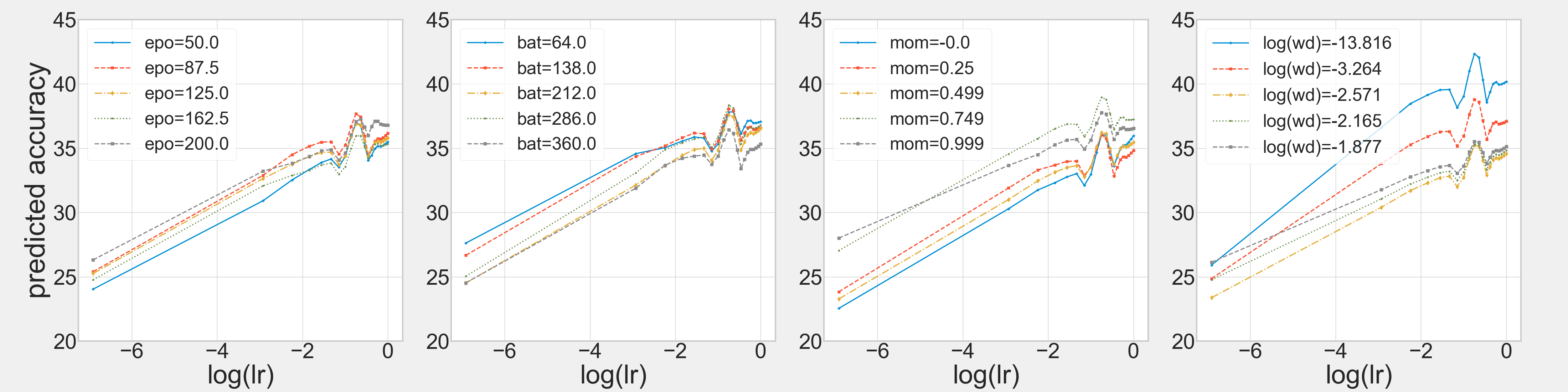}
    }
    \subcaptionbox{
    Two-factor interaction effect between `epoch' and the other shared variables
    }{
    \includegraphics[scale=0.13]{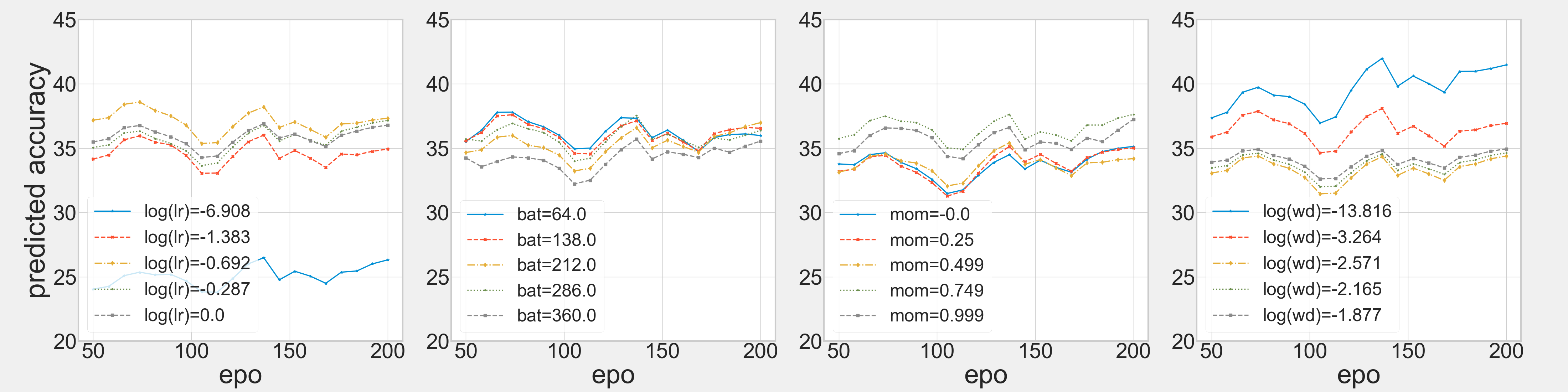}
    }
    \subcaptionbox{
    Two-factor interaction effect between `batch' and the other shared variables
    }{
    \includegraphics[scale=0.13]{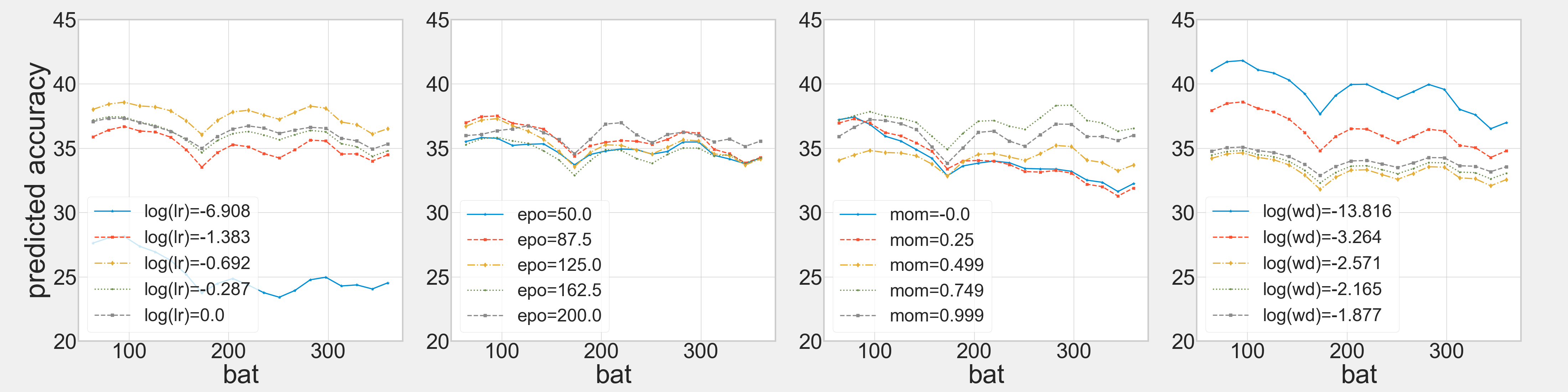}
    }
    \subcaptionbox{
    Two-factor interaction effect between `momentum' and the other shared variables
    }{
    \includegraphics[scale=0.13]{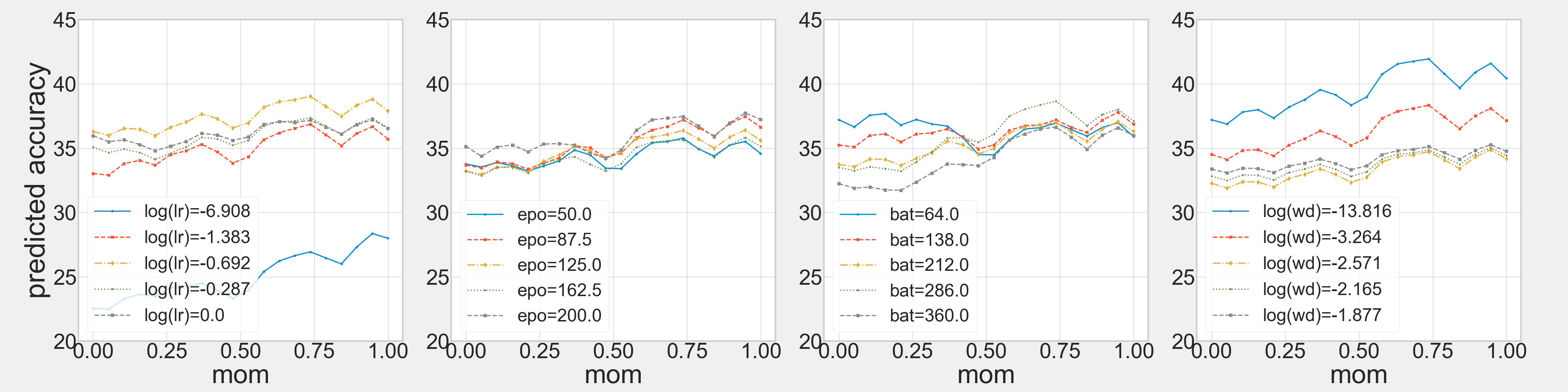}
    }
    \subcaptionbox{
    Two-factor interaction effect between `weight decay' and the other shared variables
    }{
    \includegraphics[scale=0.13]{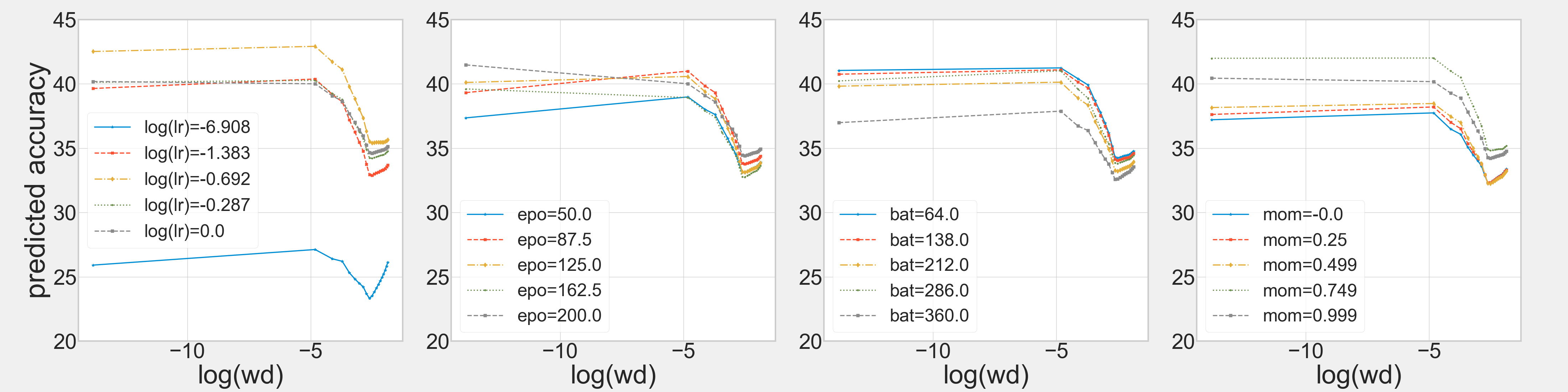}
    }
\caption{Two-factor interaction plots for the five shared variables, where learning rate is denoted by `lr', epoch is denoted by `epo', batch is denoted by `bat', momentum is denoted by `mom', and weight decay is denoted by `wd'.}
\label{fig:sensialys-sharedfac}
\end{figure*}

\subsection{Analysis of optimizer of Deep Neural Network}

Additional to the analysis of the two network types in the main text, the proposed method is also applied to the analysis of the optimizer in deep neural networks. In general, the training of deep neural network involves two types of hyperparameters: one type relates to building the structure of models, such as the number of hidden layers and the type of activation functions; the other type is directly associated with accuracy, such as the learning rate of an optimizer, batch size, weight decay, momentum, and schedulers. In practice, there has been more focuses on tuning the hyperparameters whose settings are believed to directly determine the accuracy. However, less emphasis has been given to understanding how the model structure and hyperparameters affect the accuracy and furthermore how the two types of hyperparameters interactively affect accuracy. This is partially due to the lack of an efficient model that can incorporate the complex conditional dependence in the settings of the model structure. Therefore, the proposed Bayesian optimization B$\&$N is applied to address this problem and shed light on the tuning of popular neural network architectures like ResNet and its variants.

    \begin{table}[!htb]
    \small
    \footnotesize      
    \begin{center}
    \caption{Tuning parameters and the search space for ResNet-18.} 
    \begin{tabular}{lll} %
    \toprule
        \multicolumn{2}{c}{\textbf{Shared Variables}} & {\textbf{Search Space}} \\
    \midrule
        \multicolumn{2}{l}{Learning Rate} & (0.001, 1)\\
        \hline
        \multicolumn{2}{l}{Epoch} & (100, 350)\\
        \hline
        \multicolumn{2}{l}{Batch} & (8, 350)\\
        \hline
        \multicolumn{2}{l}{Momentum} & (0, 0.999)\\
        \hline
        \multicolumn{2}{l}{Weight Decay} & ($1\times10^{-6}$, 0.999)\\
  \midrule
    \multicolumn{2}{c}{\textbf{Branch Variables}} & {\textbf{Nested Variables}} \\
    \midrule                                
    \multirow{2}{*}{Optimizer} & {SGD}  & {Scheduler} = \{{Cyclic}, {Cosine} or {Step}\} \\ 
         & {Adam} & {Scheduler} = \{{Step} or {Cosine}\} \\
    \bottomrule
    \end{tabular}
    \label{tab:Resnet18TunePara}
    \end{center}
    \end{table}

\begin{figure*}[h]
\setlength\abovecaptionskip{0pt}
\begin{center}
\small

\subcaptionbox{ResNet-18 on CIFAR-10 dataset.}{
\includegraphics[scale=0.11]{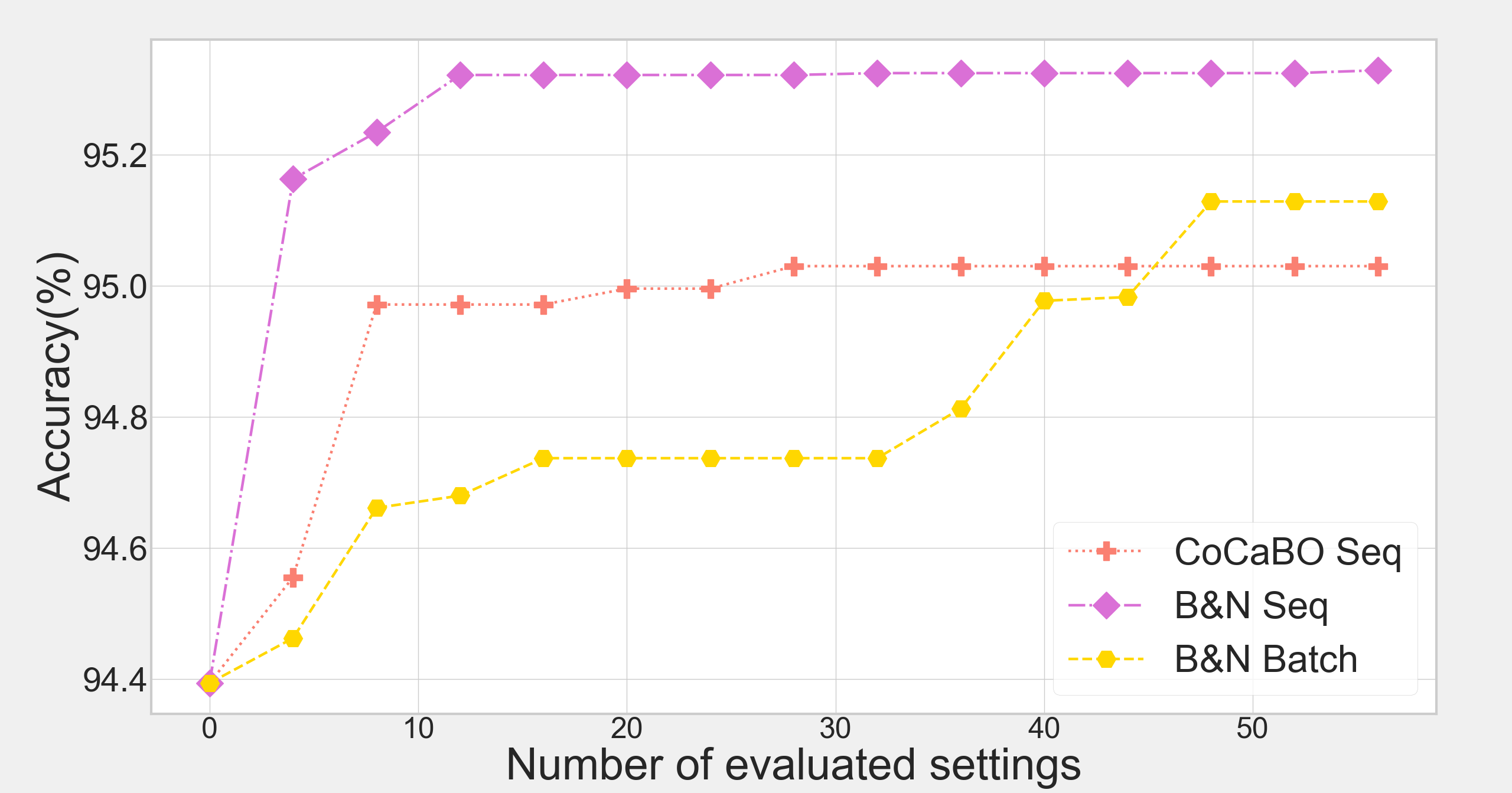}
\includegraphics[scale=0.11]{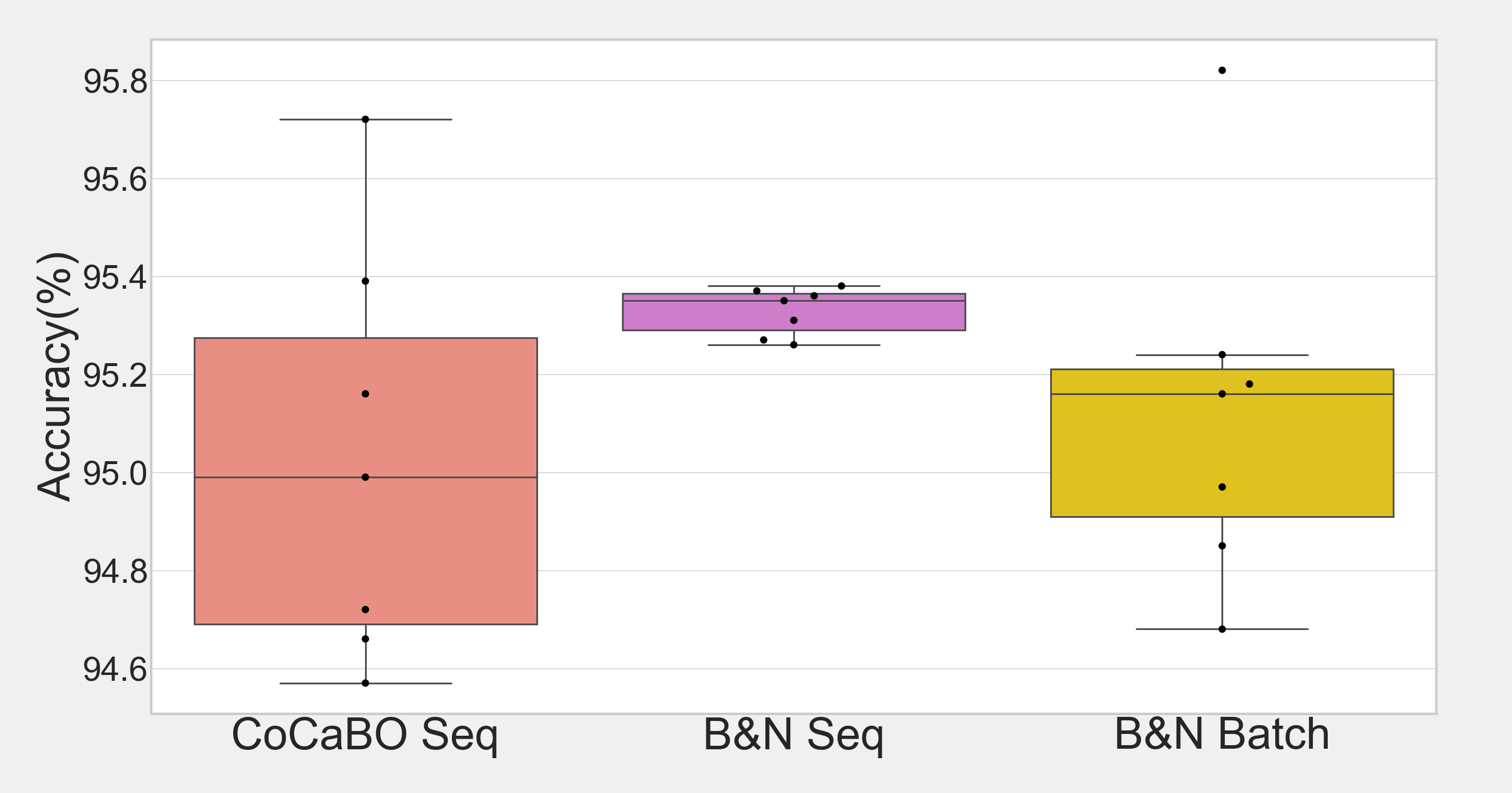}
}
\subcaptionbox{ResNet-18 on CIFAR-100 dataset.}{
\includegraphics[scale=0.11]{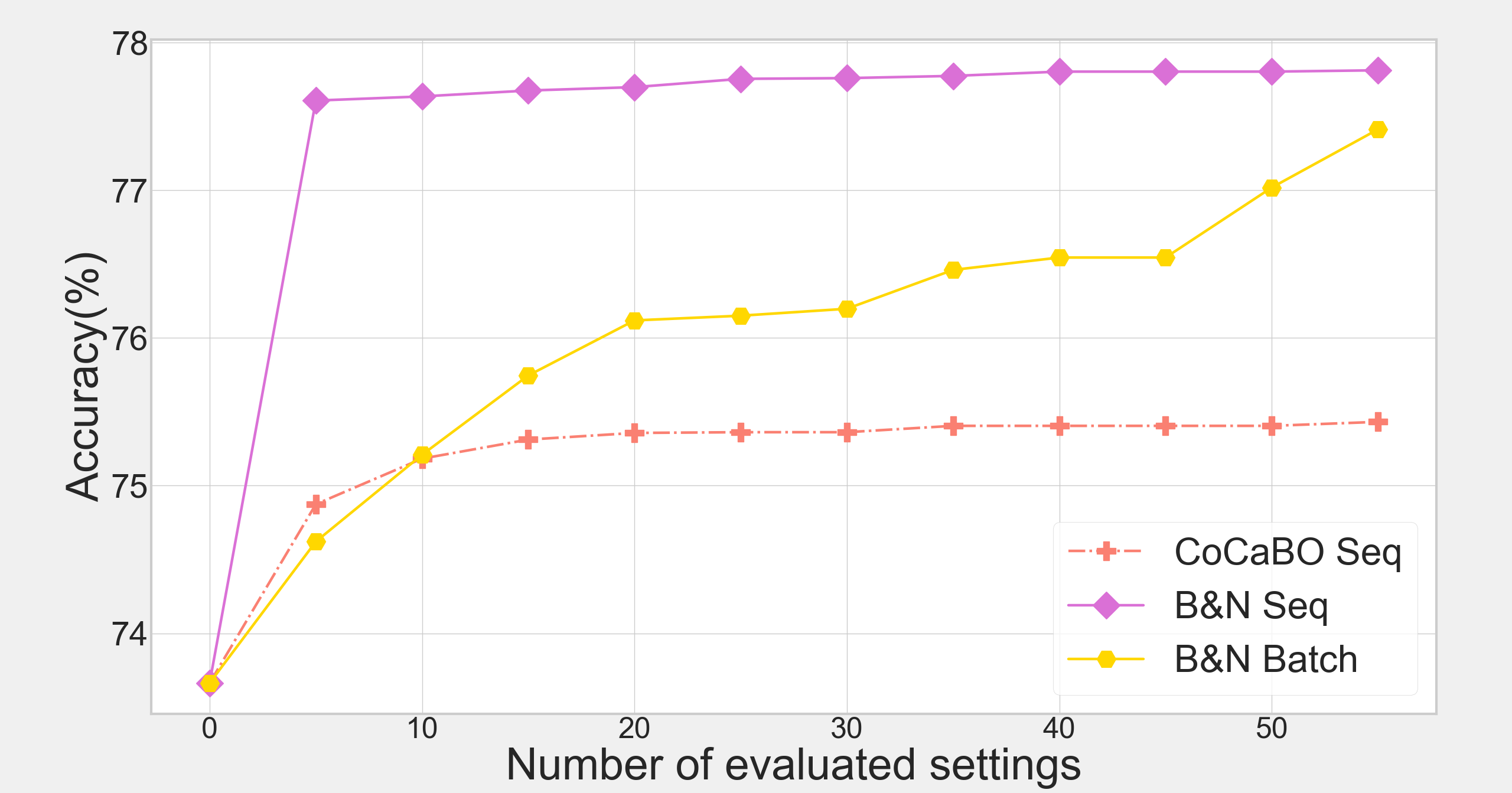}
\includegraphics[scale=0.11]{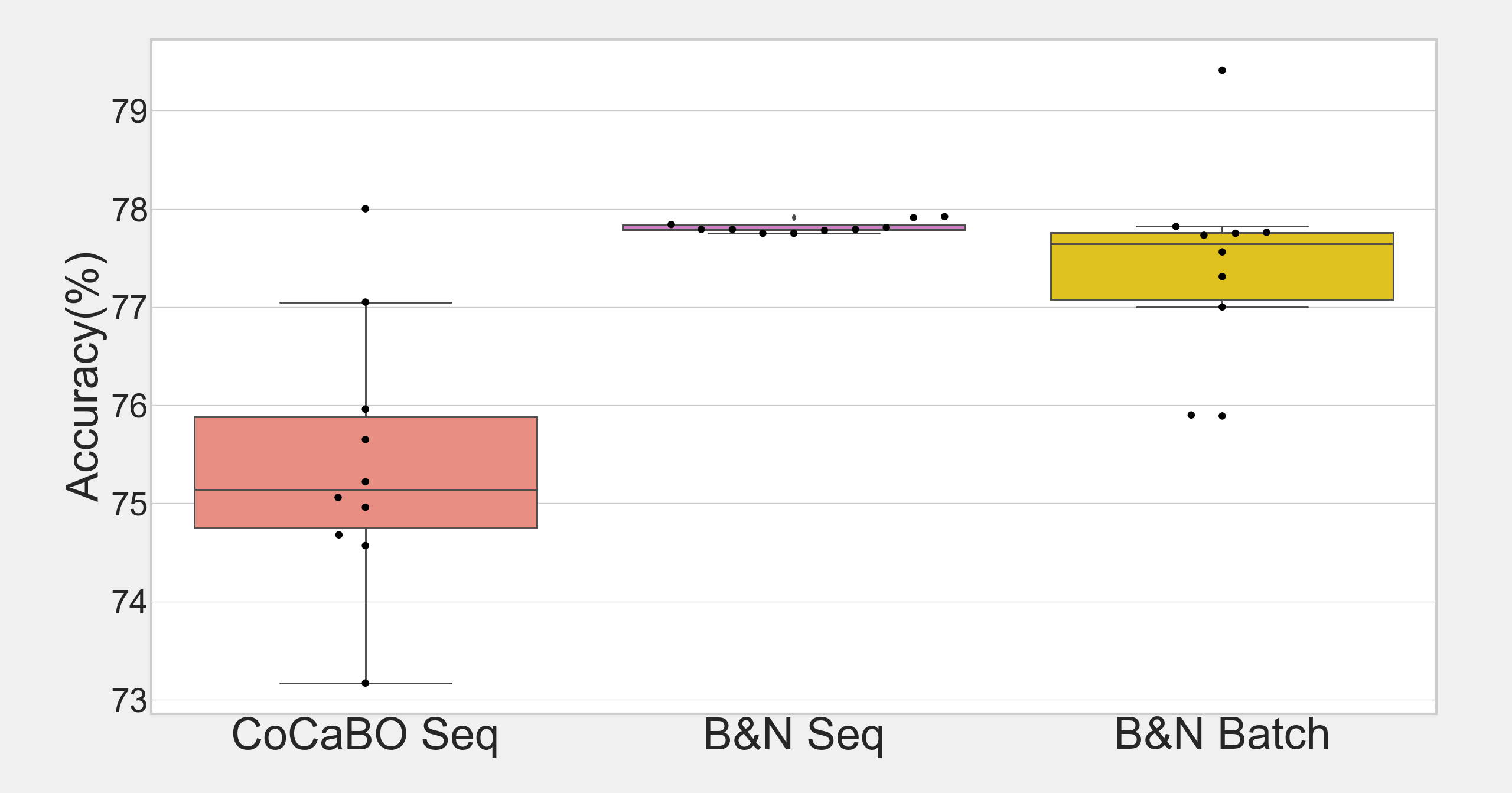}
}
\end{center}
      \vspace{6pt}
    \centering

   \caption{Optimal tuning for ResNet-18 using datasets CIFAR-10 (upper panels) and CIFAR-100 (lower panels). The proposed B$\&$N is implemented based on the sequential and the batch procedure with batch size 8. The performance is compared with CoCaBO using the sequential procedure, which is the most efficient alternative found in numerical studies. Starting with 10 initial settings randomly generated Latin hypercube designs, 56 additional settings are evaluated. The plots on the left show the best prediction accuracy as the search progresses. For each method, the box-plots on the right are the optimal prediction accuracy summarized from 20 replicates at the end of the search. It appears that the B$\&$N procedures outperform CoCaBO, and the sequential procedure of B$\&$N converges to a stationary point faster than the batch procedure.}

\label{fig:ResnetResult}\vspace{-0.6\baselineskip}
\end{figure*}

To illustrate the benefits of the proposed method, we test it in a series of image classification experiments spanning a wide range of hyperparameters: learning rate, optimizer, batch size, weight decay, scheduler, weight decay, and momentum. Among them, optimizer and scheduler are categorical and nested variables. The detailed parameter space is shown in Table \ref{tab:Resnet18TunePara}. We benchmark our method with a direct search for the best hyperparameters on training a neural network. A ResNet-18 is considered for two popular datasets, CIFAR-10 and CIFAR-100 \citep{krizhevsky2009learning}. ResNet is a most popular network family on various tasks: classification, detection, tracking, etc. CIFAR-10 contains 60,000 $32\times 32$ natural RGB images in 10 classes. Each class has 5000 training images and 1000 testing images. CIFAR-100 is like CIFAR-10, except it has 100 classes. Each class has 500 training images and 100 testing images.

We compare B$\&$N with CoCaBO \cite{ru2020bayesian} which appears to be the best alternative found in Section 3.
For both methods, the analysis starts from the same initial design with 10 settings chosen by a randomly generated space-filling design called Latin hypercube design (LHDs) from the parameter space\citep{mckay2000comparison}. The total number of additional parameter combinations is set to be 56. Similar to Section 3, both sequential and batch procedures are considered for B$\&$N. The batch procedure includes eight additional points in each iteration, and seven iterations are performed. The experiment is repeated 10 times with different randomly generated LHDs, and the performance is summarized in Figures \ref{fig:ResnetResult}. The left panel in Figure \ref{fig:ResnetResult} shows a faster convergence of the two B$\&$N methods as compared with CoCaBO, and the improvement over CoCaBO is much more significant for CIFAR-100, which is known to be challenging in parameter tuning. The box-plots on the right panel of Figures \ref{fig:ResnetResult} are summarized from the optimal prediction accuracy found at the end of the search. The B$\&$N methods have higher accuracy as compared with CoCaBO and consistently provide smaller variances, which implies that B$\&$N could achieve the desired stability while achieving better accuracy. Comparing the sequential procedure with the batch procedure, the empirical results both in synthetic function and real data indicate a considerably faster convergence to a stationary point by the sequential procedure. This result appears to confirm that actively adding points by a reliable model can significantly improve the search efficiency. Further studies will be rigorously developed in the future to investigate an optimal choice of batch size in the adaptive procedure.

\begin{table}[!htb] 
    \small
    \footnotesize        
    \begin{center}
  
    \begin{tabular}{l|cc|cc} %
    \toprule
     & \multicolumn{2}{c|}{\textbf{CIFAR-10}} &  \multicolumn{2}{c}{\textbf{CIFAR-100}} \\
    \midrule
     \textbf{Accuracy} & {\bf 95.38} & {\bf 95.26} & {\bf 79.41} & {\bf 77.75} \\
    \midrule
       Learning Rate  &  0.0016 &  0.0019	 & 0.0112 & 0.0026 \\
        \hline
         Epoch  & 350 & 101 & 350 & 138 \\
        \hline
         Batch  & 313 & 25 & 8 & 66\\
        \hline
         Momentum  & 0.37 & 0.98 & 0.55 & 0.71 \\
        \hline
         Weight Decay  & 0.2791 & 0.4146 & 0.0008 & 0.9811\\
        \hline
         Optimizer  & SGD & SGD & SGD & SGD\\
         \hline
         Schedule  & Cosine & Cyclic & Cosine & Cosine\\ 
    \bottomrule
    \end{tabular}
    \caption{Examples of the optimal settings found by B$\&$N.} 
    \label{tab:ExOptParams}
    \end{center}
    \end{table}

For a final evaluation, in Table \ref{tab:ExOptParams}, we highlight several optimal parameter settings found by B$\&$N based on a total of 66 evaluations of parameter combinations. For each dataset, two different parameter settings are demonstrated, and their corresponding classification accuracy is shown in the second row. The first optimal setting for each dataset corresponds to the best performance found in 10 different initial designs. It appears that the proposed method provides a systematic search mechanism so that, even for a relatively complex problem such as CIFAR-100, a promising tuning parameter setting can be found within limited computational effort. The second optimal setting for each dataset corresponds to sub-optimal settings but with relatively low computational costs, i.e., smaller numbers of \textit{Epoch} and \textit{Batch}. They are attractive in practice due to their computational advantages but are not commonly recognized as promising settings. 
While we note that the accuracy from different papers is not directly comparable due to the use of different optimization and regularization approaches, it is still instructive to compare this result to others in the literature. Our results on CIFAR-100 are better than that of the recent paper with Knowledge Distillation method (77.36\% \citep{yuan2020revisiting}) trained in the same ResNet-18 network.
Inspired by these results, we will further develop more general objective functions that can incorporate other practical issues including computational cost.

\bibliography{0_paper/main-bib}

\end{document}